\newtheorem{cl.}{Claim}
\newtheorem{theorem}{Theorem}
\title{Relevance Vector Machine with Weakly Informative Hyperprior and Extended Predictive Information Criterion}
\author[1]{Kazuaki. Murayama}
\author[1]{Shuichi. Kawano}
\affil[1]{Department of Computer and Network Engineering, Graduate School of Informatics and Engineering, The University of Electro-Comunications, 1-5-1 Chofugaoka, Chofu-shi, Tokyo 182-8585, Japan}
\date{\today}
\begin{document}
\maketitle
\begin{abstract}
  In the variational relevance vector machine, the gamma distribution is representative as a hyperprior over the noise precision of automatic relevance determination prior. 
  Instead of the gamma hyperprior, 
  we propose to use the inverse gamma hyperprior with a shape parameter close to zero 
  and a scale parameter not necessary close to zero.  
  This hyperprior is associated with the concept of a weakly informative prior. 
  The effect of this hyperprior is investigated through regression to non-homogeneous data. 
  Because it is difficult to capture the structure of such data with a single kernel function, 
  we apply the multiple kernel method, in which multiple kernel functions with different widths are arranged for input data. 
  We confirm that the degrees of freedom in a model 
  is controlled by adjusting the scale parameter and keeping the shape parameter close to zero. 
  A candidate for selecting the scale parameter is the predictive information criterion. 
  However the estimated model using this criterion seems to cause over-fitting. 
  This is because the multiple kernel method makes the model 
  a situation where the dimension of the model is larger than the data size. 
  To select an appropriate scale parameter even in such a situation, 
  we also propose an extended prediction information criterion. 
  It is confirmed that a multiple kernel relevance vector regression model with good predictive accuracy 
  can be obtained by selecting the scale parameter minimizing extended prediction information criterion. 
\end{abstract}
\section{Introduction\label{sec:1}}
Data with various and complicated nonlinear structures have been available, 
and statistical nonlinear modeling methods have been extensively developed in the fields of machine learning and data science. 
Constructing sparse models with a few covariates and bases is an important research topic, because simple models with sparsity 
often give better prediction than complex models. 
It has been pointed out that the purpose of statistical modeling is not to reproduce the true distribution, 
but to construct a useful model in terms of prediction \cite{akaike1974new,konishi2008information}. 

A representative method for constructing a sparse kernel regression model is the support vector machine (SVM) \cite{vapnik1998statistical,burges1998tutorial,bishop2006pattern}.
Although it had played an important role in the machine learning, 
several problems were pointed out \cite{tipping2000relevance,bishop2000variational,bishop2006pattern}. 
First, the posterior probability for prediction cannot be calculated because a model is constructed using the point estimation. 
Second, the selection of regularization parameters requires intensive calculations such as cross validation. 
To solve these problems, the relevance vector machine (RVM), which constructs a sparse kernel regression model in Bayesian method, was developed \cite{tipping2000relevance,tipping2001sparse}. 
A regression model in RVM is the same as the SVM, i.e., the linear combination of kernel functions. 
The automatic relevance determination (ARD) prior, which induces sparsity \cite{mackay1992evidence,mackay1996bayesian} over weights, is characteristic. 
The posterior probability of many weights will be concentrated around zero, and the sparse model is realized. 
The original method of obtaining the posterior of the weights 
is called the second type maximum likelihood \cite{berger2013statistical} or generalized maximum likelihood \cite{wahba1985comparison}. 
In this method, hyperparameters that maximize the marginal likelihood are estimated \cite{tipping2000relevance,tipping2001sparse}. 
To maximize marginal likelihood at high speed, a fast sequential algorithm was also proposed \cite{tipping2003fast}. 
Another way to obtain the posterior of the weights is the full Bayesian method, applying variational Bayes \cite{jordan1999introduction,jaakkola200110} to the hierarchical Bayes model with hyperpriors \cite{bishop2000variational}. 
We call the full Bayesian approach the VRVM (variational relevance vector machine) 
and distinguish it from the RVM using the second type maximum likelihood. 
A fast sequential algorithms in the VRVM was also developed by Refs. \cite{shutin2011fast1,shutin2011fast2}. 

The VRVM has primarily two hyperparameters. 
The first is the noise precision of likelihood and the second is 
that of ARD prior. 
Our interest of study is directed to what hyperprior over the latter should be used. 
It is important to study this matter 
because it contributes to sparsity in the model. 
So far, the conventional gamma hyperprior has been adopted, almost setting it to be non-informative \cite{tipping2001sparse,bishop2000variational}. 
In this case, the sparsity in estimated model depends on the parameters of the kernel function and 
often causes over-fitting or under-smoothing. 
Although few studies on a hyperprior beyond the gamma hyperprior have been reported, 
we refer to the work by Schmolck and Everson \cite{schmolck2007smooth}. 
This work reported a hyperprior depending on noise precision and associated with wavelet shrinkage. 

As an alternative to the gamma hyperprior, 
we propose to use the inverse gamma hyperprior with a shape parameter close to zero 
and a scale parameter not necessary close to zero. 
This hyperprior is related to a weakly informative prior 
which was proposed in the discussion of prior over variance in hierarchical model \cite{gelman2006prior,gelman2008weakly} and 
a general definition was also given \cite{gelman2013bayesian}. 
In other words, the inverse gamma hyperprior 
combines a non-informative property with some information that contributes to the sparsity. 
To confirm the effect of this hyperprior, 
we perform regression to non-homogeneous data. 
The original RVM and VRVM adopt a single kernel function and let us call these the SK-RVM and SK-VRVM. 
In this case, it is difficult to capture the structure of non-homogeneous data 
which includes smooth and less smooth regions \cite{donoho1995adapting}. 
Therefore, a multiple kernel method \cite{lanckriet2004learning,bach2004multiple,sonnenburg2006large,gonen2011multiple} is applied. 
Such a method has been often used in the RVM and VRVM 
because there is no limit on the type and number of kernel functions \cite{damoulas2008inferring,tzikas2009sparse,psorakis2010multiclass,blekas2014sparse}. 
In this paper, we call these MK-RVM and MK-VRVM. 
It is confirmed that the degrees of freedom in model 
is controlled by adjusting the scale parameter while the shape parameter is fixed on near zero. 

When the inverse gamma hyperprior is adopted, some reasonable selection criterion for selecting the scale parameter is necessary. 
A predictive information criterion ($\mathrm{PIC}$) could be used as such a criterion. 
However, we confirm that the model obtained by PIC tends to cause over-fitting. 
This phenomenon is associated with a situation where the dimension of the model $P$ is larger than the data size $N$, i.e., $P\gg N$ by applying multiple kernel method. 
Similar phenomena was also reported in model selection with the BIC \cite{schwarz1978estimating}. 
The cause of this phenomena seems to be that 
the BIC is constructed by assigning uniform prior probability to each model \cite{schwarz1978estimating}. 
To solve this problem, an extended BIC ($\mathrm{EBIC}$) 
that does not make the prior probability over the models uniform was proposed \cite{chen2008extended}. 
We apply this idea to the traditional $\mathrm{PIC}$, and propose an extended predictive information criterion ($\mathrm{EPIC}$). 
Through regression to non-homogeneous data, 
we confirm that the MK-VRVM with inverse gamma hyperprior whose scale parameter is selected by EPIC 
perform well in terms of predictive accuracy. 

The structure of the paper is as follows: 
In Sec.\ \ref{sec:2}, a general formulation of the nonlinear regression model based on basis expansion and the conventional VRVM are given. 
In Sec.\ \ref{sec:3}, the inverse gamma distribution is introduced as a hyperprior over noise precision of the ARD prior, and 
we explain it with reference to a weakly informative prior. 
In Sec.\ \ref{sec:4}, the EPIC and a method of calculating a bias correction term of it are provided. 
In Sec.\ \ref{sec:5}, the effect of the proposed method is confirmed through numerical experiments. 
In Sec.\ \ref{sec:6}, we discuss our results. 
\section{Nonlinear Regression and Variational Relevance Vector Machine\label{sec:2}}
\subsection{Nonlinear Regression Model with Basis Expansion\label{sec:2-1}}
This subsection describes nonlinear regression model based on basis expansion, where the number of basis functions is $M$. 
Let $\left\{(\bm{x}_n,y_n);n=1,\cdots ,N\right\}$ be the independent observations in terms of a response variable $y$ and 
an explanatory variable $\bm{x}$, where the dimension is assumed to be arbitrary. 
For these observations, a regression model based on basis expansion with basis functions $\left\{\phi_m(\cdot);m=0,\cdots,M-1\right\}$ is 
\begin{equation}
  y_n=w_0+\sum_{m=1}^{M-1}w_m\phi_m(\bm{x}_n)+\varepsilon_n,\qquad n=1,\cdots,N, \label{reg1}
\end{equation}
where $w_m$ are regression weights and errors $\varepsilon_n$ are independently, identically distributed according to $\mathcal{N}(0,\beta^{-1})$. 
For Eq.\ \eqref{reg1}, the weights vector and basis functions vector are defined as 
$\bm{w}=(w_0,w_1,\cdots,w_{M-1})^{\mathrm{T}}$ and $\bm{\phi}(\bm{x}_n)=(\phi_0(\bm{x}_n)=1,\phi_1(\bm{x}_n),\cdots,\phi_{M-1}(\bm{x}_n))^{\mathrm{T}}$, respectively. 
Eq.\ \eqref{reg1} with these vector is reformulated as 
\begin{equation}
  \bm{y}=\bm{\Phi}\bm{w}+\bm{\varepsilon},\label{reg2}
\end{equation} 
where $\bm{y}=(y_1,\cdots,y_n)^{\mathrm{T}}$ is a observations vector, $\bm{\Phi}=(\bm{\phi}(\bm{x}_1),\cdots,\bm{\phi}(\bm{x}_N))^{\mathrm{T}}$ is a design matrix, and 
$\bm{\varepsilon}=(\varepsilon_1,\cdots,\varepsilon_N)^{\mathrm{T}}$ is a error vector. 
In this case, the likelihood is 
\begin{equation}
  p(\bm{y}|\bm{w},\beta)=\mathcal{N}\left(\bm{y}|\bm{\Phi}\bm{w},\beta^{-1}\bm{I}_N\right), \label{likelihood}
\end{equation}
where $\bm{I}_N$ is a $N$-dimensional identity matrix and dependence on the explanatory variables $\bm{x}$ is omitted. 
It is necessary to determine the basis functions in Eq.\ \eqref{reg1} according to the purpose of analysis. 
The representative ones are $B$-spline \cite{de1978practical}, natural cubic splines \cite{green1993nonparametric}, and 
radial basis functions \cite{hastie2009elements}. 
One of the methods estimating $(\bm{w}^{\mathrm{T}},\beta^{-1})^{\mathrm{T}}$ in Eq.\ \eqref{likelihood} 
is maximum likelihood, which sometime cause over-fitting \cite{konishi2008information}. 
To avoid such over-fitting, Lasso \cite{tibshirani1996regression} and Ridge \cite{hoerl1970ridge} were established as regularization methods. 
Another way is Bayesian method, which assuming priors over $(\bm{w}^{\mathrm{T}},\beta^{-1})^{\mathrm{T}}$ and posteriors of them are calculated 
using Bayesian rule. 
\subsection{Variational Relevance Vector Machine\label{sec:2-2}}
In this subsection, variational relevance vector machine (VRVM) \cite{bishop2000variational,tipping2001sparse} is explained and 
we provide a basis to introduce the weakly informative hyperprior. 
The kernel functions are used as basis functions in Eq.\ \eqref{reg1}.  
In our study, we apply the Gaussian kernel functions \cite{simonoff2012smoothing,wand1994kernel} given as 
\begin{equation}
  \phi_m(\bm{x}_n)=K(\bm{x}_n,\bm{x}_m)=\exp\left[-\frac{\|\bm{x}_n-\bm{x}_m\|^2}{2h^2}\right],\qquad n=1,\cdots,N\qquad m=1,\cdots,N, \label{kernelsingle}
\end{equation}
where $h$ is bandwidth parameter with positive value. 
It is assumed that the prior over $\bm{w}$ is the ARD prior with hyperparameter $\bm{\alpha}=\left(\alpha_0,\alpha_1,\cdots,\alpha_{M-1}\right)^{\mathrm{T}}$
 \cite{mackay1992evidence,mackay1996bayesian} defined as 
\begin{equation}
  p(\bm{w}|\bm{\alpha})=\prod_{m=0}^{M-1} \mathcal{N}\left(w_m|0,\alpha _m ^{-1}\right)=\mathcal{N}\left(\bm{0},\bm{A}^{-1}\right), \label{ARD}
\end{equation}
where $\bm{A}=\mathrm{diag}\left(\alpha_0,\cdots,\alpha_{M-1}\right)$. 
Eq.\ \eqref{ARD} enhances the sparsity, i.e., many weights will estimated to be zero. 
The $\bm{x}_n$ corresponding to the remaining non-zero weights are called relevance vectors. 
The gamma hyperprior have been conventionally selected as the hyperpriors over $\bm{\alpha}$ and $\beta$ \cite{tipping2001sparse,bishop2000variational}, i.e., 
\begin{eqnarray}
  p(\bm{\alpha} )&=&\prod_{m=0}^{M-1}\mathrm{Gam}(\alpha _m|a,b)=\prod_{m=0}^{M-1}\frac{1}{\Gamma(a)}b^{a}\alpha_m^{a-1}e^{-b\alpha_m}, \label{alphapriorGam}\\
  p(\beta)&=&\mathrm{Gam}(\beta |c,d)=\frac{1}{\Gamma(c)}d^{c}\beta^{c-1}e^{-d\beta}. \label{betaprior}
\end{eqnarray} 
The previous method applied Eqs.\ \eqref{alphapriorGam} and \eqref{betaprior} setting $a=b=c=d=10^{-6}$ \cite{bishop2000variational}, 
which are close to non-informative hyperprior in terms of Jeffreys prior \cite{jeffreys1946invariant}. 

A method of obtaining posterior distributions with the variational Bayes \cite{jordan1999introduction,jaakkola200110} 
and performing regression with the prediction distribution have been proposed \cite{bishop2000variational}. 
Decomposing the true posterior distribution as $p(\bm{w},\bm{\alpha},\beta|\bm{y})=q(\bm{w})q(\bm{\alpha})q(\bm{\beta})$ 
and minimizing the KL distance between joint distribution of all latent variables and variational posteriors $q(\bm{w})q(\bm{\alpha})q(\bm{\beta})$, 
they are given as 
\begin{eqnarray}
  q(\bm{w})&=&\mathcal{N}(\bm{w}|\tilde{\bm{\mu}},\tilde{\bm{\Sigma}}),\label{qw} \\
  q(\bm{\alpha})&=&\prod_{m=0}^{M-1}\mathrm{Gam}(\alpha_m|\tilde{a}_m,\tilde{b}_m),\label{qalpha} \\ 
  q(\bm{\beta})&=&\mathrm{Gam}(\beta|\tilde{c},\tilde{d}),\label{qbeta}
\end{eqnarray}
where specified parameters are given as 
\begin{eqnarray}
    \tilde{\bm{\Sigma}}&=&\left\{\mathbb{E}{[\bm{A}]}+\mathbb{E}{[\beta]}{\bm{\Phi}^{\mathrm{T}}\bm{\Phi}}\right\}^{-1},\label{varw} \\
    \tilde{\bm{\mu}}&=&\mathbb{E}{[\beta]}\tilde{\bm{\Sigma}}\bm{\Phi}^{\mathrm{T}}\bm{y},\label{meanw}\\
  \tilde{a}_m&=&a+1/2\label{amtil}, \\
  \tilde{b}_m&=&b+\mathbb{E}{\left[w_m^2\right]}/2, \label{bmtil}\\
  \tilde{c}&=&c+N/2,\label{ctil} \\
  \tilde{d}&=&d+\frac{1}{2}\left\{\bm{y}\bm{y}^{\mathrm{T}}-2\mathbb{E}{[\bm{w}]}^{\mathrm{T}}\bm{\Phi}^{\mathrm{T}}\bm{y}+\mathrm{Tr}\bm{\Phi}\mathbb{E}{\left[\bm{w}\bm{w}^{\mathrm{T}}\right]}\bm{\Phi}^{\mathrm{T}}\right\}.\label{dtil}
\end{eqnarray}

Next, the predictive distribution is presented. 
It is given as 
\begin{equation}
  h_{\mathrm{VB}}(\bm{z}|\bm{y})= \iint p(\bm{z}|\bm{w},\beta)q(\bm{w})q(\beta)d\bm{w}d\bm{\beta}, \label{fullyosoku}
\end{equation}
where the true posterior $p(\bm{w},\beta|\bm{y})$ is replaced by Eqs.\ \eqref{qw} and \eqref{qbeta} and 
$\bm{z}$ is future data generated independently on observed $\bm{y}$. 
Analytical integration in terms of $\beta$ in Eq.\ \eqref{fullyosoku} is difficult. 
An approximation that replaces $\beta$ with $\mathbb{E}{[\beta]}$ is applied \cite{bishop2000variational} and Eq.\ \eqref{fullyosoku} is reduced to 
\begin{equation}
  h_{\mathrm{VB}}(\bm{z}|\bm{y})=\int p(\bm{z}|\bm{w},\mathbb{E}{[\beta]})q(\bm{w})d\bm{w}. \label{VByosoku}
\end{equation}
Ref. \cite{bishop2000variational} explained the validity of this approximation as follow: 
when $N$ is large enough, $\beta$ concentrates around $\mathbb{E}{[\beta]}$ because $\mathbb{V}{[\beta]}\sim O(1/N)$. 
The integration in Eq.\ \eqref{VByosoku} can be performed analytically, then we obtain 
\begin{equation}
  h_{\mathrm{VB}}(\bm{z}|\bm{y})=\mathcal{N}(\bm{\mu}_\ast,\bm{\Sigma}_\ast), \label{VByosoku2}
\end{equation}
where specified parameters are given as 
\begin{eqnarray}
  \bm{\mu}_\ast&=&\mathbb{E}{[\beta]}\bm{\Phi}\tilde{\bm{\Sigma}}\bm{\Phi}^{\mathrm{T}}\bm{y},\label{VByosokumean} \\
  \bm{\Sigma}_\ast&=&\mathbb{E}{[\beta]}^{-1}\bm{I}_N+\bm{\Phi}\tilde{\bm{\Sigma}}\bm{\Phi}^{\mathrm{T}}.\label{VByosokuvar}
\end{eqnarray}
The predictive mean Eq.\ \eqref{VByosokumean} is interpreted in the transformation by the hat matrix. 
In this case it is 
\begin{equation}
  \bm{H}=\mathbb{E}{[\beta]}\bm{\Phi}\tilde{\bm{\Sigma}}\bm{\Phi}^{\mathrm{T}}, \label{hatmatrix}
\end{equation}
then we reformulate Eq.\ \eqref{VByosokumean} as $\bm{\mu}_\ast=\bm{H}\bm{y}$. 
Eqs.\ \eqref{varw}-\eqref{dtil} need to be alternately calculated and optimized. 
A variational lower bound is used to determine the convergence of this iterative calculation, which is given as 
\begin{equation}
  \begin{split}
    L=&\mathbb{E}{[\ln p(\bm{y}|\bm{w},\beta)]}+\mathbb{E}{[\ln p(\bm{w}|\bm{\alpha})}]+\mathbb{E}{[\ln p(\beta)]}+\mathbb{E}{[\ln p(\bm{\alpha})]}\\
      &-\mathbb{E}{[\ln q(\bm{w})]}-\mathbb{E}{[\ln q(\beta)]}-\mathbb{E}{[\ln q(\bm{\alpha})]}, \label{LB}
  \end{split}
\end{equation}
where each element is easily evaluated using Eqs.\ \eqref{qw}-\eqref{qbeta} (see Ref. \cite{bishop2000variational} in detail). 
\section{Weakly Informative Hyperprior and Multiple Kernel RVM\label{sec:3}}
\subsection{Inverse Gamma Hyperprior\label{sec:3-1}}
In this subsection, the inverse gamma hyperprior over $\bm{\alpha}$ is proposed instead of the gamma hyperprior. 
Differences between these hyperpriors are described.
Furthermore, an effect of using it is given in light of the concept of the weakly informative prior. 
The inverse gamma hyperprior is 
\begin{equation}
  p(\bm{\alpha})=\prod_{m=0}^{M-1}\mathrm{InGam}(\alpha _m|a,b)=\prod_{m=0}^{M-1}\frac{1}{\Gamma(a)}b^{a}\alpha_m^{-a-1}e^{-b/\alpha_m}, \label{alphaInGam}
\end{equation}
where the shape parameter $a$ is fixed on near zero while the scale parameter $b$ is not necessary close to zero. 
To compare the difference between Eqs.\ \eqref{alphapriorGam} and \eqref{alphaInGam}, 
we consider the weight prior $p(w_m)$ obtained by $p(w_m)= \int p(w_m|\alpha_m)p(\alpha_m)d\alpha_m$ together with $p(\alpha_m)$ focusing on $m$-th component. 
It is calculated as $t$-distribution given as 
\begin{equation}
  p(w_m)=\frac{\Gamma(a+1/2)b^a}{\sqrt{2\pi}\Gamma(a)}\left(b+\frac{w_m^2}{2}\right)^{-(a+1/2)}, \label{pwmGam}
\end{equation}
for gamma hyperprior. For the inverse gamma hyperprior, it is the variance gamma distribution \cite{madan1990variance} given as 
\begin{equation}
  p(w_m)=\frac{2b^a}{\sqrt{2\pi}\Gamma(a)}\left(w_m^2/2b\right)^{-(-a+1/2)/2}K_{(-a+1/2)}(\sqrt{2bw_m^2}), \label{pwmInGam}
\end{equation}
where $K_{(-a+1/2)}(\cdot)$ is the modified Bessel functions of the second kind. 
The $\mathrm{InGam}(\alpha_m|a,b)$ and Eq.\ \eqref{pwmInGam} are plotted in Fig.\ \ref{comp_priors} keeping shape parameter $a=10^{-6}$ and setting scale parameter $b$ several finite value 
together with $\mathrm{Gam}(\alpha_m|a,b)$ and Eq.\ \eqref{pwmGam}. 
We note that the parameter $b$ in gamma hyperprior plays role of rate parameter, not scale parameter. 
\begin{figure}[H]
  \begin{tabular}{cc}
    \hspace{-7pt}
    \begin{minipage}[t]{0.5\hsize}
      \includegraphics[clip,scale=0.5]{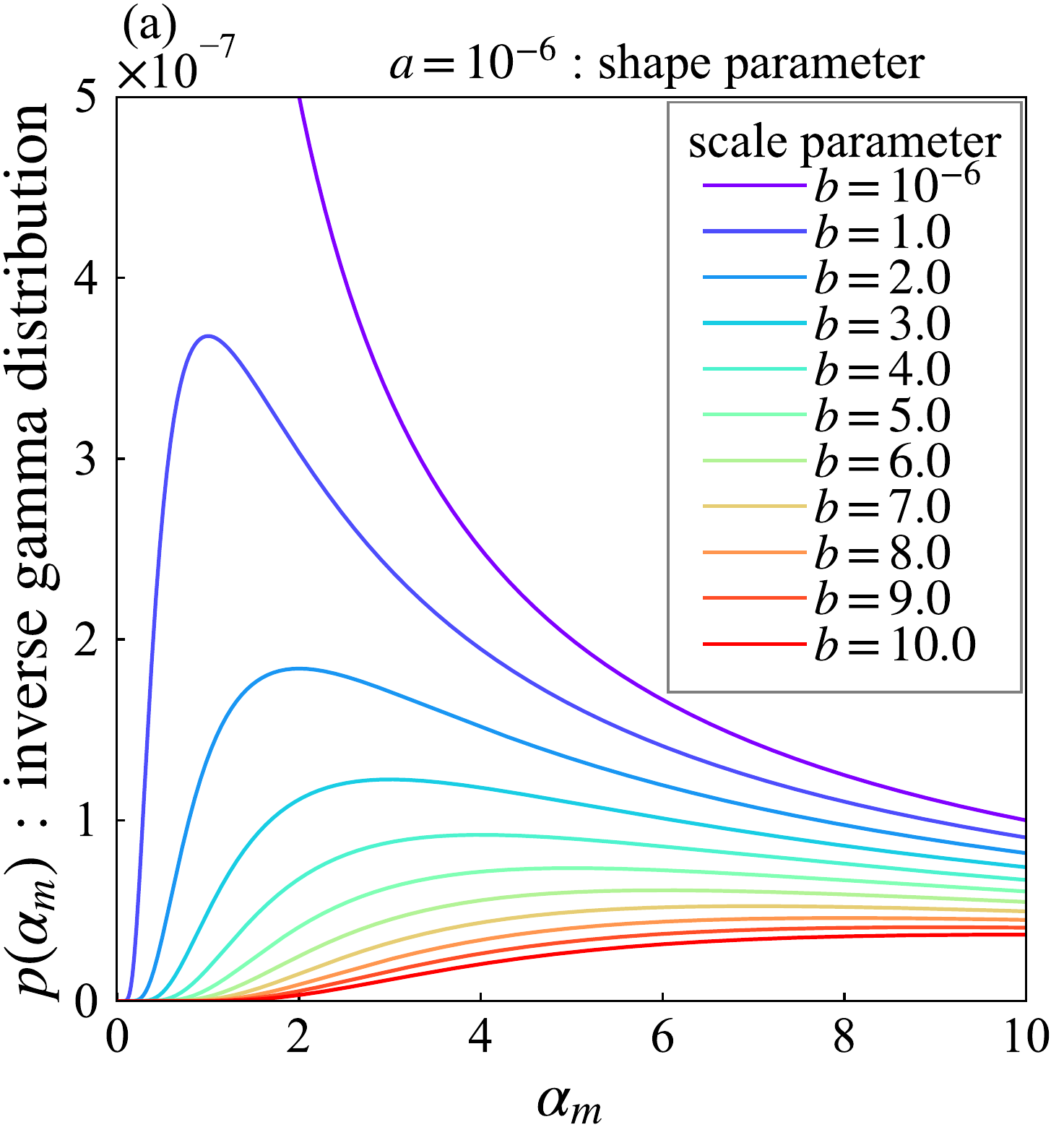}
    \end{minipage} &
    \hspace{-8pt}
    \begin{minipage}[t]{0.5\hsize}
      \includegraphics[clip,scale=0.5]{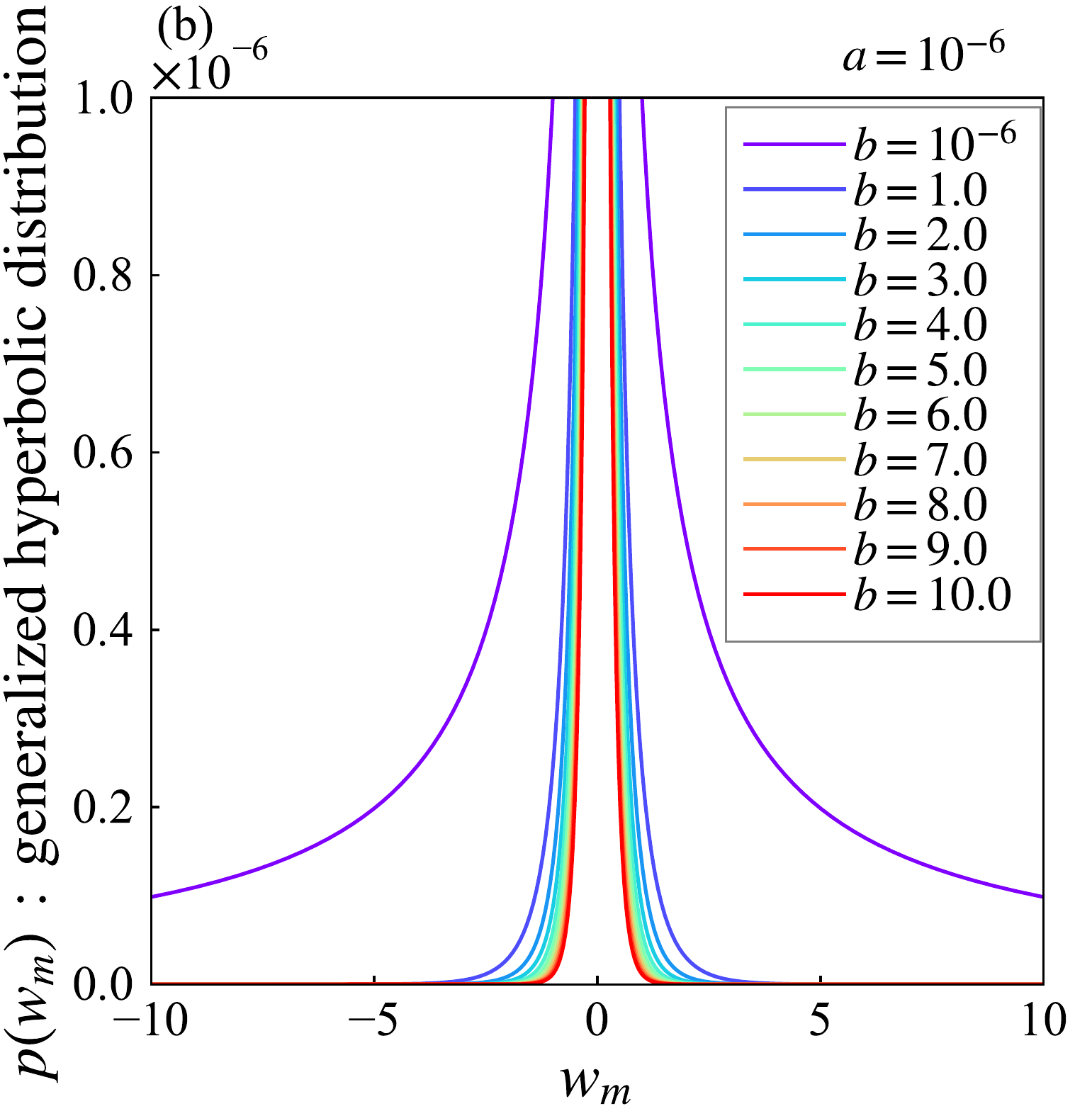}
    \end{minipage}\\
    \hspace{-7pt}
    \begin{minipage}[t]{0.5\hsize}
      \includegraphics[clip,scale=0.5]{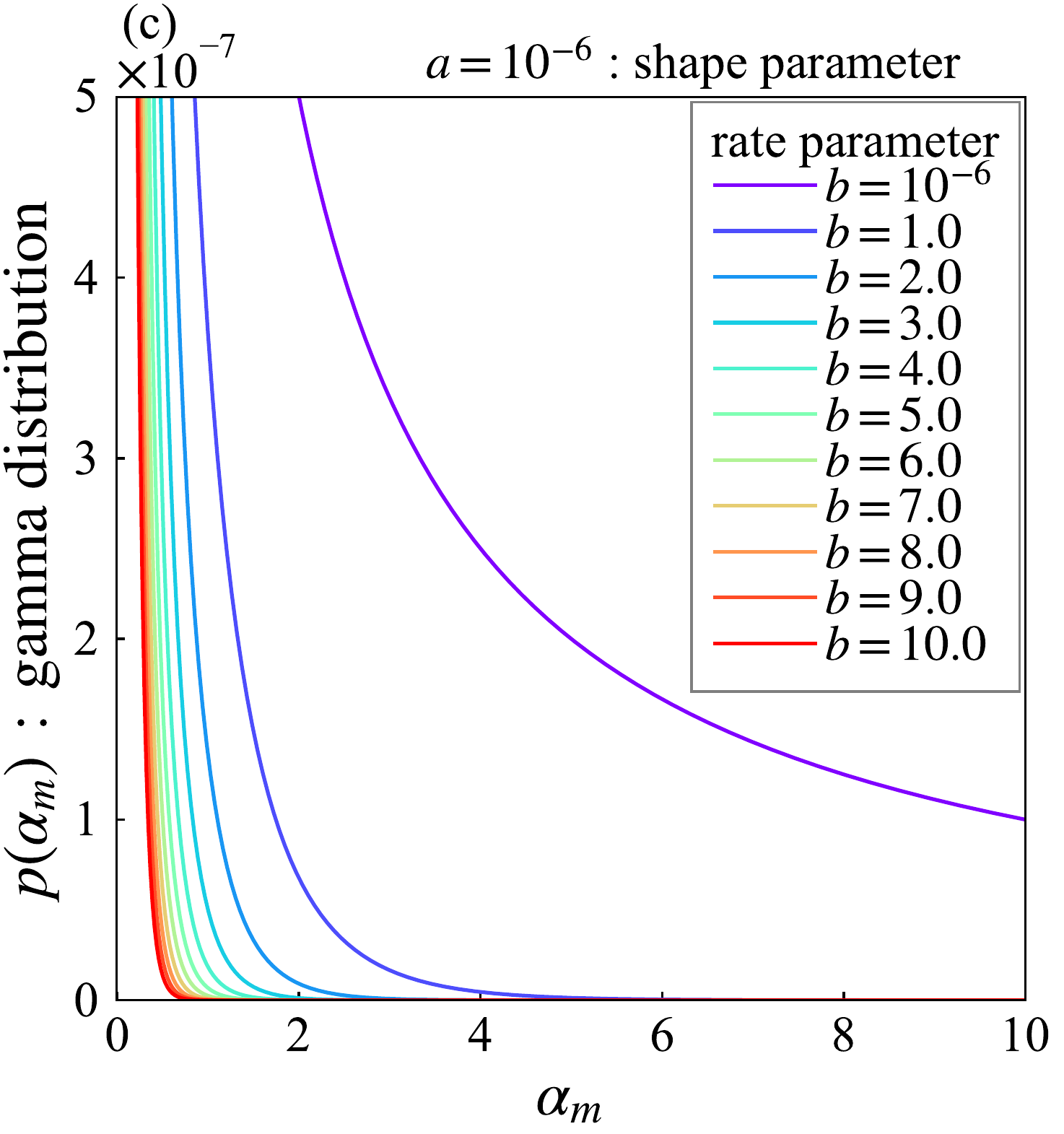}
    \end{minipage} &
    \hspace{-8pt}
    \begin{minipage}[t]{0.5\hsize}
      \includegraphics[clip,scale=0.5]{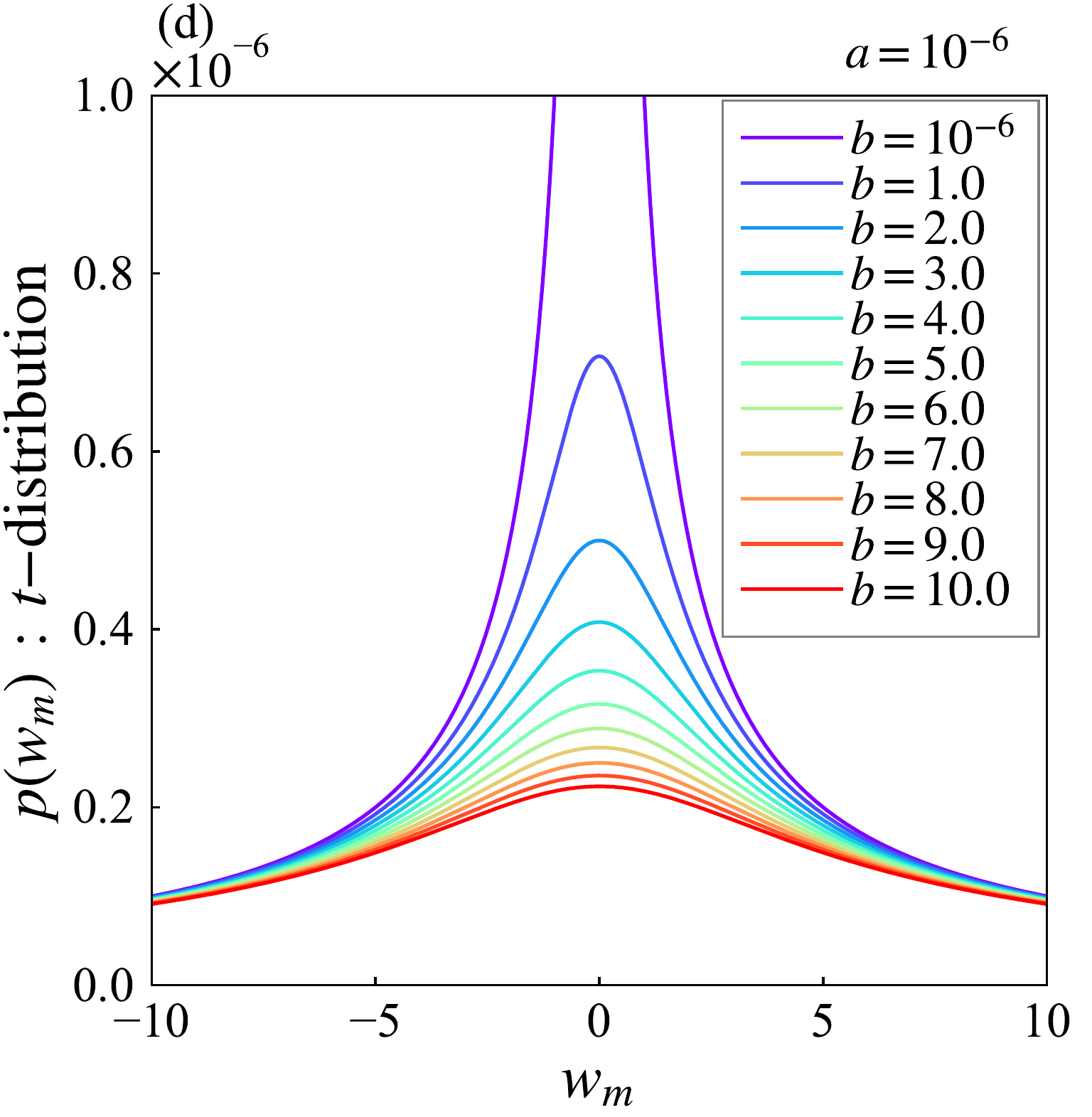}
    \end{minipage}
  \end{tabular}
  \caption{\label{comp_priors} (a) An inverse gamma hyperprior over $\alpha_m$ 
  and (b) corresponding weight prior over $w_m$, generalized hyperbolic distribution. 
  (c) A gamma hyperprior over $\alpha_m$ and (d) corresponding weight prior over $w_m$, $t$-distribution.
  }
\end{figure}
The Jeffreys prior \cite{jeffreys1946invariant}, $p(\alpha_m)\sim 1/\alpha_m$, 
could be realized by setting $(a,b)\sim(0,0)$ for both hyperpriors. 
The difference appears at $(a\sim 0,b\neq 0)$. 
As shown in Fig. \ref{comp_priors}(a), 
the density is shifted to a region where $\alpha_m$ is large by increasing $b$ and fixing $a$ close to zero. 
Such a hyperprior will increase the probability that weights concentrate around zero estimation, 
because $\alpha_m \rightarrow \infty$ corresponds to a zero estimation of $w_m$ (see Eq.\ \eqref{ARD}). 
This property is clearly shown in Fig. \ref{comp_priors}(b), i.e., $p(w_m)$ is concentrated near zero. 
The term $\alpha_m^{-a-1}$ in $\mathrm{InGam}(\alpha_m|a,b)$ is fixed to the Jeffreys prior $\alpha_m^{-1}$. 
By increasing $b$, the influence of the other term $e^{-b/\alpha_m}$ appears, 
including some information regarding sparsity into $\mathrm{InGam}(\alpha_m|a,b)$. 
In other words, $\mathrm{InGam}(\alpha_m|a\sim 0,b \neq 0)\sim \alpha_m^{-1}e^{-b/\alpha_m}$ can be regarded as between non-informative 
and informative. 
This hyperprior with such a effect is consistent with the concept of weakly informative prior, which was defined as 
"{\it it is proper but is set up so that the information it does provide is intentionally weaker than whatever actual knowledge is available}" \cite{gelman2008weakly,gelman2013bayesian}, 
and more simply mentioned as the prior between pure some version non-informative prior and the full informative prior \cite{gelman2013bayesian}. 

In the above definition, the gamma hyperprior with $(a\sim 0,b\neq 0)$ is also the weakly informative prior. 
However, it does not work as hyperprior. 
As shown in Figs.\ \ref{comp_priors}(c) and \ref{comp_priors}(d), 
the density of the region where $\alpha_m$ is large collapses by increasing $b$ 
while that of $w_m$ around zero also collapses. 
In this case, the weights estimated to non-zero will increase, violating the sparsity which is the essential concept of RVM and VRVM. 
\subsection{Variational Posterior corresponding Inverse Gamma Hyperprior\label{sec:3-2}}
In this subsection, the variational Bayes is applied to the hierarchical Bayes model with inverse gamma hyperprior over $\bm{\alpha}$, 
and variational posteriors are introduced. 
The prior over $\bm{w}$ and hyperprior over $\beta$ are the same as in Sec. \ref{sec:2-2}. 
The variational posterior of $\bm{\alpha}$ is generalized inverse Gaussian distribution \cite{barndorff1997normal,bibby2003hyperbolic,kawamura2003characterizations} 
given as 
\begin{equation}
  q(\bm{\alpha})=\prod_{m=0}^{M-1}\mathrm{GIG}(\alpha_m|\tilde{p}_m,\tilde{a}_m,\tilde{b}_m)=\prod_{m=0}^{M-1}\frac{(\tilde{a}_m/\tilde{b}_m)^2}{2K_{\tilde{p}_m}(\sqrt{\tilde{a}_m\tilde{b}_m})}\alpha_m^{\tilde{p}_m-1}\exp{\left(-\frac{\tilde{a}_m\alpha_m+\tilde{b}_m/\alpha_m}{2}\right)}, \label{GIG}
\end{equation}
where $K_p(\cdot)$ is the modified Bessel function of the second kind and specified parameters are given as 
\begin{eqnarray}
  \tilde{p}_m&=&-a+1/2,\label{pmtil} \\
  \tilde{a}_m&=&\mathbb{E}{\left[w_m^2\right]},\label{InGamamtil} \\
  \tilde{b}_m&=&2b.\label{InGambmtil}
\end{eqnarray}
We note that the variational posterior of $\bm{w}$ and that of $\beta$ are the same as Eqs.\ \eqref{qw} and Eq.\ \eqref{qbeta}, respectively. 

The construction of predictive distribution obeys Eq.\ \eqref{VByosoku2}, replacing Eq.\ \eqref{qalpha} with Eq.\ \eqref{GIG}. 
The terms $\mathbb{E}[\ln p(\bm{\alpha})]$ and $-\mathbb{E}[\ln q(\bm{\alpha})]$ in lower bound Eq.\ \eqref{LB} are replaced with 
\begin{equation}
  \mathbb{E}{[\ln p(\bm{\alpha})]}=\sum_{m=0}^{M-1}\left\{a\ln b-(a+1)\mathbb{E}{[\ln\alpha_m]}-b\mathbb{E}{[1/\alpha_m]}-\ln \Gamma(a)\right\}, \label{lnpalpha}\\
\end{equation}
  \begin{eqnarray} 
  -\mathbb{E}{[\ln q(\bm{\alpha})]}&& =-\sum_{m=0}^{M-1}\left\{\tilde{p}_m\ln\sqrt{\frac{\tilde{a}_m}{\tilde{b}_m}}+(\tilde{p}_m-1)\mathbb{E}{[\ln\alpha_m]}\right. \nonumber\\
  &&\left.-\frac{1}{2}\left(\tilde{a}_m\mathbb{E}{[a_m]}+\tilde{b}_m\mathbb{E}{[1/a_m]}\right)-\ln\left(2K_{\tilde{p}_m}\left[\sqrt{\tilde{a}_m\tilde{b}_m}\right]\right) \right\}. \label{lnqalpha}
\end{eqnarray}
The required moments in terms of Eq.\ \eqref{GIG} to calculate Eqs.\ \eqref{varw}, \eqref{lnpalpha}, and \eqref{lnqalpha}
are given as 
\begin{eqnarray}
  \mathbb{E}{[\alpha_m]}&=&\sqrt{\frac{\tilde{b}_m}{\tilde{a}_m}}\frac{K_{\tilde{p}_{m}+1}\left(\sqrt{\tilde{a}_m\tilde{b}_m}\right)}{K_{\tilde{p}_m}\left(\sqrt{\tilde{a}_m\tilde{b}_m}\right)},\label{meanalpha}\\
  \mathbb{E}{[1/\alpha_m]}&=&\sqrt{\frac{\tilde{a}_m}{\tilde{b}_m}}\frac{K_{\tilde{p}_{m}+1}\left(\sqrt{\tilde{a}_m\tilde{b}_m}\right)}{K_{\tilde{p}_m}\left(\sqrt{\tilde{a}_m\tilde{b}_m}\right)}-2\frac{\tilde{p}_m}{\tilde{b}_m},\label{meanalphainv}\\
  \mathbb{E}{[\ln \alpha_m]}&=&\ln\sqrt{\frac{\tilde{b}_m}{\tilde{a}_m}}+\left.\frac{\partial}{\partial p}\ln\left(K_p\left[\sqrt{\tilde{a}_m\tilde{b}_m}\right]\right)\right|_{\tilde{p}_m}.\label{meanalphainv}
\end{eqnarray}

In the case of gamma hyperprior, a fast sequential algorithm \cite{shutin2011fast1,shutin2011fast2} was developed. 
It is desirable to apply such a algorithm to VRVM with inverse gamma hyperprior. 
However, this application seems to be difficult. 
The reasons are discussed in the Appendix \ref{sequentalalgo}. 
\subsection{Multiple Kernel for RVM Regression\label{sec:3-3}}
We confirm the effect of inverse gamma hyperprior over $\bm{\alpha}$ through regression to non-homogeneous data. 
The multiple kernel method \cite{lanckriet2004learning,bach2004multiple,sonnenburg2006large,gonen2011multiple} 
is applied to the VRVM regression, to capture the nonlinear structure of such a data. 
This subsection describes the formulation of the multiple kernel VRVM (MK-VRVM) regression model. 

We use $J$ Gaussian kernel functions with various widths $\{h_j;j=1,\cdots,J\}$ for input data 
instead of single Gaussian kernel function Eq.\ \eqref{kernelsingle}. 
Each function with width $h_j$ is 
\begin{equation}
  K(\bm{x}_n,\bm{x}_m;h_j)=\exp\left[-\frac{\|\bm{x}_n-\bm{x}_m\|^2}{2h_j^2}\right],\qquad n=1,\cdots,N \qquad m=1,\cdots,N. \label{multiplekerenl}
\end{equation}
In this case, regression model Eq.\ \eqref{reg1} is reformulated as 
\begin{equation}
  y_n=w_{0,1}+\sum_{j=1}^J\sum_{m=1}^{N}w_{j,m}K(\bm{x_n},\bm{x_m};h_j)+\varepsilon_n,\qquad n=1,\cdots,N, \label{regmulti}
\end{equation}
where $w_{j,m}$ is $m$-th weight belonging to $j$-th Gaussian kernel with $h_j$ 
and $w_{0,1}$ is bias weight assuming that it is the first weight belonging to the basis function of class $j=0$. 
We denote the weights vector of Gaussian kernel with $h_j$ as $\bm{w}_{j}=\left(w_{j,1},\cdots,w_{j,N}\right)^{\mathrm{T}}$. 
To formulate easily, let us also denote the weight vector of $w_{0,1}$ as $\bm{w}_0=(w_{0,1})^{\mathrm{T}}$ with only one component. 
The overall weights vector $\bm{w}$ is 
\begin{equation}
  \bm{w}=\left(\bm{w}_0^{\mathrm{T}},\bm{w}_1^{\mathrm{T}},\bm{w}_2^{\mathrm{T}},\cdots,\bm{w}_J^{\mathrm{T}}\right)^{\mathrm{T}}. \label{wmulti}
\end{equation}
We also denote the basis functions vector of Gaussian kernel with $h_j$ as \\
$\bm{\phi}_j(\bm{x}_n)=\left(K(\bm{x}_n,\bm{x}_1;h_j),\cdots,K(\bm{x}_n,\bm{x}_N;h_j)\right)^{\mathrm{T}}$ 
together with $\bm{\phi}_0(\bm{x}_n)=(1)^{\mathrm{T}}$ only one component. 
The vector of the overall basis functions is 
\begin{equation}
  \bm{\phi}(\bm{x}_n)=\left(\bm{\phi}_0(\bm{x}_n)^{\mathrm{T}},\bm{\phi}_1(\bm{x}_n)^{\mathrm{T}},\bm{\phi}_2(\bm{x}_n)^{\mathrm{T}},\cdots,\bm{\phi}_J(\bm{x}_n)^{\mathrm{T}}\right)^{\mathrm{T}}. \label{basismulti}
\end{equation}
Eq.\ \eqref{regmulti} is reformulated as Eq.\ \eqref{reg2} using Eqs.\ \eqref{wmulti} and \eqref{basismulti}, 
and follows Sec.\ \ref{sec:2-1}. 
To follows Secs.\ \ref{sec:2-2}, \ref{sec:3-1}, 
and \ref{sec:3-2}, the noise precision of Eq.\ \eqref{wmulti} is extended as 
$\bm{\alpha}=\left(\bm{\alpha}_0^{\mathrm{T}},\bm{\alpha}_1^{\mathrm{T}},\bm{\alpha}_2^{\mathrm{T}},\cdots,\bm{\alpha}_J^{\mathrm{T}}\right)^{\mathrm{T}}$. 
In this case, $\sum_{m=0}^{M-1}$ and $\prod_{m=0}^{M-1}$ are replaced as $\sum_{j=0}^{J}\sum_{m=1}^{\mathrm{dim}(\bm{w}_j)}$ and 
$\prod_{j=0}^{J}\prod_{m=1}^{\mathrm{dim}(\bm{w}_j)}$, respectively, 
where $\mathrm{dim}(\cdot)$ expresses dimensions of vector. 
We note that $\mathrm{dim}(\bm{w}_j)=\mathrm{dim}(\bm{\alpha}_j)=1$ for $j=0$ which corresponds to only one component of bias weight, 
and $\mathrm{dim}(\bm{w}_j)=\mathrm{dim}(\bm{\alpha}_j)=N$ for $\forall j\in\left\{1,2,\cdot,J\right\}$. 
In our study, $J=10$ Gaussian kernels from $h_1=0.005$ to $h_{10}=0.05$ at $0.005$ intervals are applied, which 
are shown in Fig.\ \ref{Figmultiplekerenl} .
\begin{figure}[H]
  \begin{center}
  \includegraphics[scale=0.5,clip]{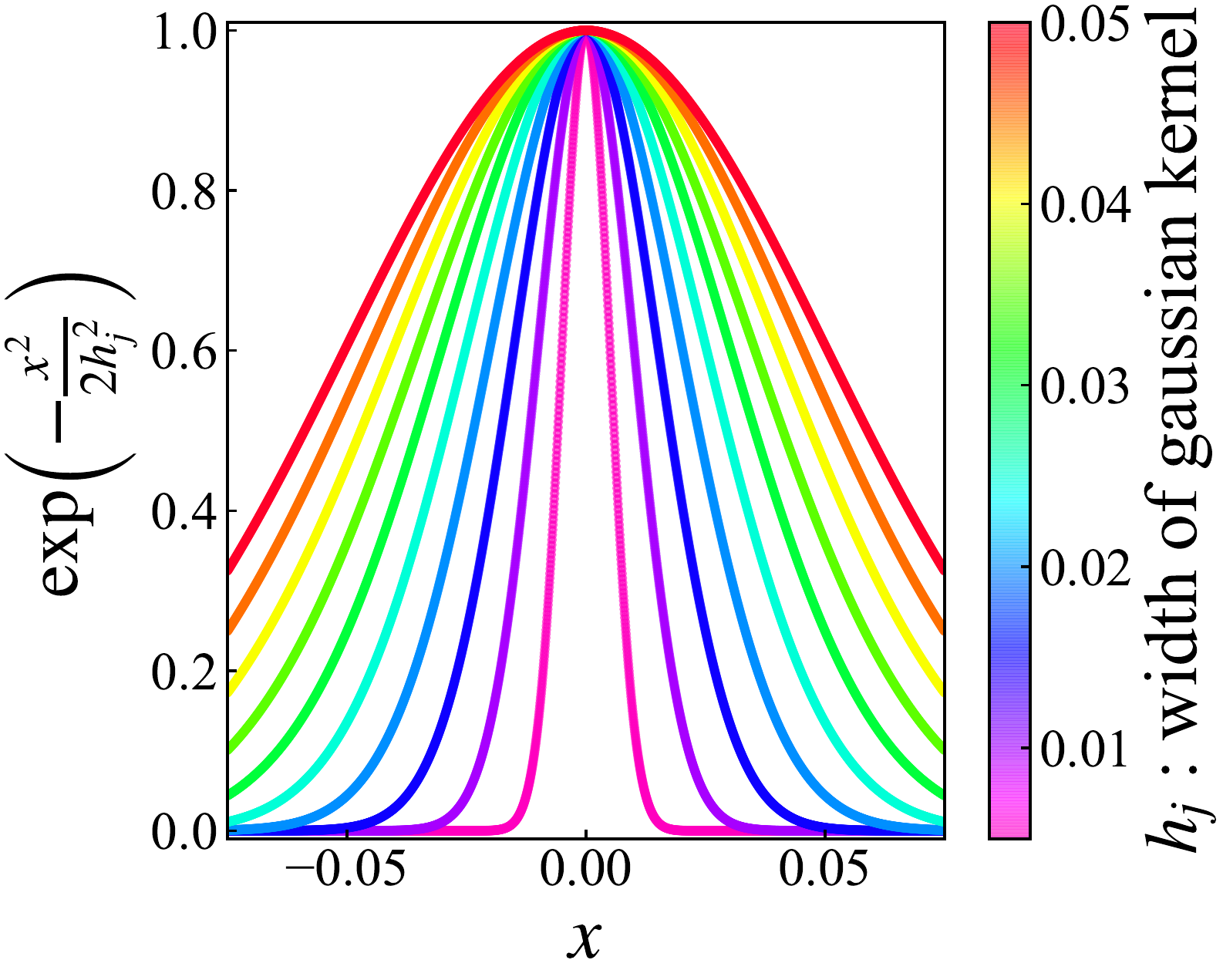}
  \caption{\label{Figmultiplekerenl}  $J=10$ Gaussian kernel functions are plotted. 
  The widths of them are from $h_1=0.005$ to $h_{10}=0.05$ at $0.005$ intervals.}
\end{center}
\end{figure}

\section{Extended Predictive Information Criterion and Bias Correction\label{sec:4}}
\subsection{Extended Predictive Information Criterion\label{sec:4-1}}
The selection of scale parameter $b$ is necessary when we use inverse gamma hyperprior Eq.\ \eqref{alphaInGam}. 
This subsection provides an information criterion to select it. 
Considering that regression is performed using the predictive distribution Eq.\ \eqref{VByosoku2}, 
it is reasonable to use the predictive information criterion (PIC)\cite{genshiro1997information} 
given as 
\begin{equation}
  \mathrm{PIC}=-2\ln h(\bm{y}|\bm{y})+2Bias. \label{PIC}
\end{equation}
Henceforth, we call $\ln h(\bm{y}|\bm{y})$ the log-likelihood for simplicity, following the terminology in Ref. \cite{konishi2008information}. 
The bias correction of log-likelihood is 
\begin{equation}
  Bias=\mathbb{E}_{g(\bm{y})}{\left[\ln{h(\bm{y}|\bm{y})}-\mathbb{E}_{g(\bm{z})}{[\ln{h(\bm{z}|\bm{y})}]}\right]}, \label{Bias}
\end{equation}
where $g(\cdot)$ is true distribution and $\bm{z}$ is future data generated independently on observed $\bm{y}$. 
Eq.\ \eqref{PIC} is derived by minimizing the KL distance between $g(\bm{z})$ and $h(\bm{z}|\bm{y})$ \cite{genshiro1997information,konishi2008information} 
like AIC\cite{akaike1974new}, TIC\cite{takeuchi1976distribution,stone1977asymptotic}, and GIC\cite{konishi1996generalised,konishi2003asymptotic}. 
As shown in later Sec.\ \ref{sec:5}, the MK-VRVM with $\mathrm{InGam}(\bm{\alpha}|a\sim0, b)$ tends to cause over-fitting when $b$ is selected by PIC. 
This is because the number of parameters included in the model, $P=1+NJ$, is larger than data size $N$, 
which is considered the same reason that the model selected by BIC tends to cause over-fitting in $P\gg N$ situation \cite{chen2008extended}. 
The $\mathrm{EBIC}_\gamma$ \cite{chen2008extended}, which is derived without uniform prior over model, solves this problem in BIC. 

As a counterpart of $\mathrm{EBIC}_\gamma$ in PIC, we propose 
\begin{equation}
  \mathrm{EPIC}_{\gamma}=-2\ln{h(\bm{y}|\bm{y})}+2Bias+2\gamma \ln \left({}_{P}\mathrm{C}_{df}\right),\label{EPIC}
\end{equation}
where $\gamma \in [0,1]$ specify prior probability over models and ${}_P\mathrm{C}_{df}$ 
is the binomial coefficients representing the number of models with $df$ as degrees of freedom. 
To derive Eq.\ \eqref{EPIC}, 
we confirm that the PIC can also be obtained from the maximizing the posterior probability of model 
, which is similar to the derivation of the BIC. 
In this process, Eq.\ \eqref{EPIC} is obtained. 
The detail of this work is allocated in Appendix \ref{derivationEPIC}. 
We call Eq. \eqref{EPIC} extended predictive information criterion (EPIC). 

There is no report on the rational determination of $\gamma \in[0,1]$ in $\mathrm{EBIC}_\gamma$. 
The previous studies \cite{chen2008extended,foygel2010extended,chen2012extended} used it, 
specifying several values of $\gamma$. We follow their way in later numerical evaluation. 
11 values of $\gamma$ at intervals of 0.1 from zero to one, i.e. $\gamma=0,0.1,0.2,\cdots,1$, are specified and 
we select $b$ with 11 patterns of $\mathrm{EPIC}_\gamma$. 

Two candidates representing the degrees of freedom $df$ in Eq.\ \eqref{EPIC} are presented. 
The first is the number of relevance vectors, i.e., the weights estimated to be non-zero. 
We denote this number as $RVs$. 
Although $w_{0,1}$ in Eq.\ \eqref{regmulti} does not correspond to input data, 
we include this component into $RVs$ for simplicity if it is estimated to be non-zero. 
The second is $\mathrm{Tr}\bm{H}$ representing the effective degrees of freedom \cite{wahba1990spline,moody1992effective,ye1998measuring} 
where $\bm{H}$ is Eq. \eqref{hatmatrix}. 
In the latter case, the third term in Eq.\ \eqref{EPIC} 
is calculated using binomial coefficients extended to real numbers \cite{sofo2009sums} 
because $\mathrm{Tr}\bm{H}$ might be real number. 
We note that $\beta$ is not included into $df$ in our study 
because it does not contributes to the sparsity unlike $\bm{w}$. 
To obtain reliable calculation results, we performed numerical experiments 
for both cases when $\beta$ was included in $df$ and when it was not. 
It was confirmed that the results are almost the same. Therefore, this issue does not seem to matter. 
We will not show the result including $\beta$ in $df$ for simplicity. 
\subsection{Bias Correction of Log-Likelihood\label{sec:4-2}}
This subsection presents the bias correction term Eq.\ \eqref{Bias} in detail. 
Here, we propose three types of bias correction of log-likelihood. 
The first is analytically calculated. 
Under numerical experiments using artificial data, 
the true distribution $g(\bm{z})$ is clear. 
In our study, artificial data are generated according to $g(\bm{z})=\mathcal{N}(\bm{\mu},\sigma^2\bm{I})$, 
then we could calculate Eq.\ \eqref{Bias} as
\begin{equation}
  Bias_{\mathrm{true}}=\sigma^2\mathrm{Tr}\bm{\Sigma}_\ast^{-1}\bm{H}, \label{Biastrue}
\end{equation}
where $\bm{\Sigma}_\ast$ is Eq.\ \eqref{VByosokuvar} and $\bm{H}$ is Eq.\ \eqref{hatmatrix}. 
Eq.\ \eqref{Biastrue} cannot be applied to the analysis of real data 
because the true distribution is unknown. 
Computable bias correction of log-likelihood in such situations are proposed as the second and third. 
The second is calculated by replacing the true distribution with the plug-in distribution of the variational posterior mean, i.e. 
$g(\bm{z})=\mathcal{N}\left(\bm{\Phi}\mathbb{E}{[\bm{w}]},\mathbb{E}{[\beta]}^{-1}\bm{I}\right)$, then we obtain 
\begin{equation}
  Bias_{\mathrm{plug}}=\mathbb{E}{[\beta]}^{-1}\mathrm{Tr}\bm{\Sigma}_\ast^{-1}\bm{H}. \label{Biasplug}
\end{equation}
The third way is to calculate it in the framework of deriving the GIC \cite{konishi1996generalised,konishi2003asymptotic}. 
According to the Ref. \cite{konishi2008information}, variational predictive distribution Eq.\ \eqref{VByosoku} can be 
approximated as follows: 
\begin{equation}
  h_{\mathrm{VB}}(\bm{z}|\bm{y})=p(\bm{z}|\bm{\hat{w}},\mathbb{E}[\beta])\left\{1+O_p(n^{-1})\right\}, \label{BiasGIC}
\end{equation}
where $p(\cdot)$ obey Eq.\ \eqref{likelihood} and $\hat{\bm{w}}$ is mode of Eq.\ \eqref{qw} which is identical to the mean $\mathbb{E}{[\bm{w}]}$. 
In this case, a statistical functional $\bm{\psi}$ associated with $\hat{\bm{w}}=\mathbb{E}{[\bm{w}]}$ is 
\begin{equation}
  \bm{\psi}(y_n,\bm{w})=\frac{\partial}{\partial\bm{w}}\mathbb{E}{[\ln p(y_n,|\bm{w},\mathbb{E}{[\beta]})]}, \qquad n=1,\cdots,N. \label{psi}
\end{equation} 
Using Eq.\ \eqref{psi}, the bias correction of log-likelihood can be calculated as 
\begin{equation}
  Bias_{\mathrm{GIC}}=\mathrm{Tr}\bm{R}(\bm{\psi},\hat{G})^{-1}\bm{Q}(\bm{\psi},\hat{G}), \label{BiasGIC}
\end{equation}
where $\bm{R}(\bm{\psi},\hat{G})$ and $\bm{Q}(\bm{\psi},\hat{G})$ are given as 
\begin{equation}
  \bm{R}(\bm{\psi},\hat{G})=-\frac{1}{N}\sum_{n=1}^{N}\left.\frac{\partial \bm{\psi}(y_n,\bm{w})}{\partial \bm{w}}\right|_{\bm{w}=\hat{\bm{w}}}=\frac{1}{N}\left\{\mathbb{E}{[\beta]}\bm{\Phi}^{\mathrm{T}}\bm{\Phi}+N\mathbb{E}{[\bm{A}]} \right\}, \label{Rpsi}
\end{equation}
\begin{equation}
  \bm{Q}(\bm{\psi},\hat{G})=\frac{1}{N}\sum_{n=1}^{N}\left.\bm{\psi}(y_n,\bm{w})\frac{\log{p(y_n|\bm{w},\mathbb{E}[\beta])}}{\partial \bm{w}^{\mathrm{T}}}\right|_{\bm{w}=\hat{\bm{w}}}=\frac{1}{N}\left\{\mathbb{E}{[\beta]}^2\bm{\Phi}^{\mathrm{T}}\bm{\Lambda}^2\bm{\Phi}-\mathbb{E}{[\beta]}\mathbb{E}{[\bm{A}]}\bm{\hat{w}}\bm{1}_N^{\mathrm{T}}\bm{\Lambda}\bm{\Phi}\right\}. \label{Qpsi}
\end{equation}
Here $\bm{\Lambda}=\bm{y}-\bm{\Phi}\mathbb{E}{[\bm{w}]}$ and $\bm{1}_N=(1,\cdots,1)^{\mathrm{T}}$ is $N$ dimensional vector whose all components are one. 

We note that the derivative with respect to $\beta$ is not included into Eqs.\ \eqref{Rpsi} and \eqref{Qpsi} in our study. 
The reason is the same as at the end of Sec. \ref{sec:4-1}, i.e., $\beta$ is not related to sparsity unlike $\bm{w}$. 
Even here we also performed numerical experiments for both cases when the derivative with respect to $\beta$ 
was included and when it was not. 
The results are confirmed almost the same. Therefore this issue does not seem to matter. 
We will not show the result including the derivative with respect to $\beta$ for simplicity. 
\section{Numerical Evaluation\label{sec:5}}
This section presents numerical evaluations. 
We generated samples $\left\{(x_n,y_n);n=1,\cdots,N \right\}$ from $y_n=g(x_n)+\varepsilon_n$ with a regression function 
$g(x)$ and error $\varepsilon_n$. 
The input points $\left\{ x_n;n=1,\cdots,N \right\}$ are generated according to the uniform distribution over the interval $[0,1]$. 
Let us assume that 
the error $\varepsilon _n$ are independently distributed according to $\mathcal{N}(0,\sigma^2)$. 
The sample size and standard deviation were $N=50,100$ and $\sigma=0.1,0.3$, respectively. 
The BUMPS, DOPPLER, BLOCKS, and HEAVISINE data \cite{donoho1995adapting} 
were used for $g(x)$. The detail functions of these data are described in Appendix \ref{gx}. 
Henceforth, we will show the results of BUMPS and DOPPLER in detail. 
Those of BLOCKS and HEAVISINE are allocated to an online supplemental material. 

The MK-VRVM regression with $\mathrm{InGam}(\bm{\alpha}|a=10^{-6},b)$ was performed for $100$ Monte Carlo trials, 
by specifying several values of scale parameter $b$. 
For initialization, we set all components of $\mathbb{E}{[\bm{w}]}$ as $0.01$, then Eqs.\ \eqref{pmtil}-\eqref{InGambmtil} and \eqref{varw}-\eqref{dtil} were calculated alternately. 
Due to the restrictions on the calculation resources, the following conditions were used. 
$1)$ The range of scale parameter $b$ is from 0.01 to 10 at intervals of 0.01 and from 10 to 15 at intervals of 1 for $\sigma=0.3$. 
In the case of $\sigma=0.1$, the range is from 0.01 to 10 at intervals of 0.01 and from 10 to 65 at intervals of 1. 
$2)$ When $\alpha_m$ reaches $10^{4}$, that component is not updated any more. 
$3)$ The calculation stops when variational lower bound between two consecutive iteration is smaller than $0.4$. 
$4)$ We regard $\mathbb{E}{[w_m]}$ as relevance vector when absolute value of it is more than 0.03. 
Conversely, weights whose absolute values of expectation are or less $0.03$ are regarded as irrelevance vectors. 
\subsection{Control of sparsity by Inverse Gamma Hyperprior\label{sec:5-1}}
For the aforementioned numerical calculation, 
we evaluate the number of relevance vectors $RVs$ and effective degrees of freedom $\mathrm{Tr}\bm{H}$ 
against scale parameter $b$. 
The results of $df$ against $b$ are shown in Fig. \ref{df_b_BUMPS} for the BUMPS data. 
Furthermore, the regression, corresponding $\mathbb{E}{[\bm{w}]}$ and $\mathbb{E}{[\bm{\alpha}]}$ 
are shown in Fig. \ref{regwalpha_b_BUMPS} in the case of $b=0.01$ and $15.0$. 
Those for DOPPLER data are show in Figs. \ref{df_b_DOPPLER} and \ref{regwalpha_b_DOPPLER}. 
\begin{figure}[H]
  \begin{tabular}{cc}
    \begin{minipage}[t]{0.5\hsize}
      \begin{center}
      \includegraphics[clip,scale=0.5]{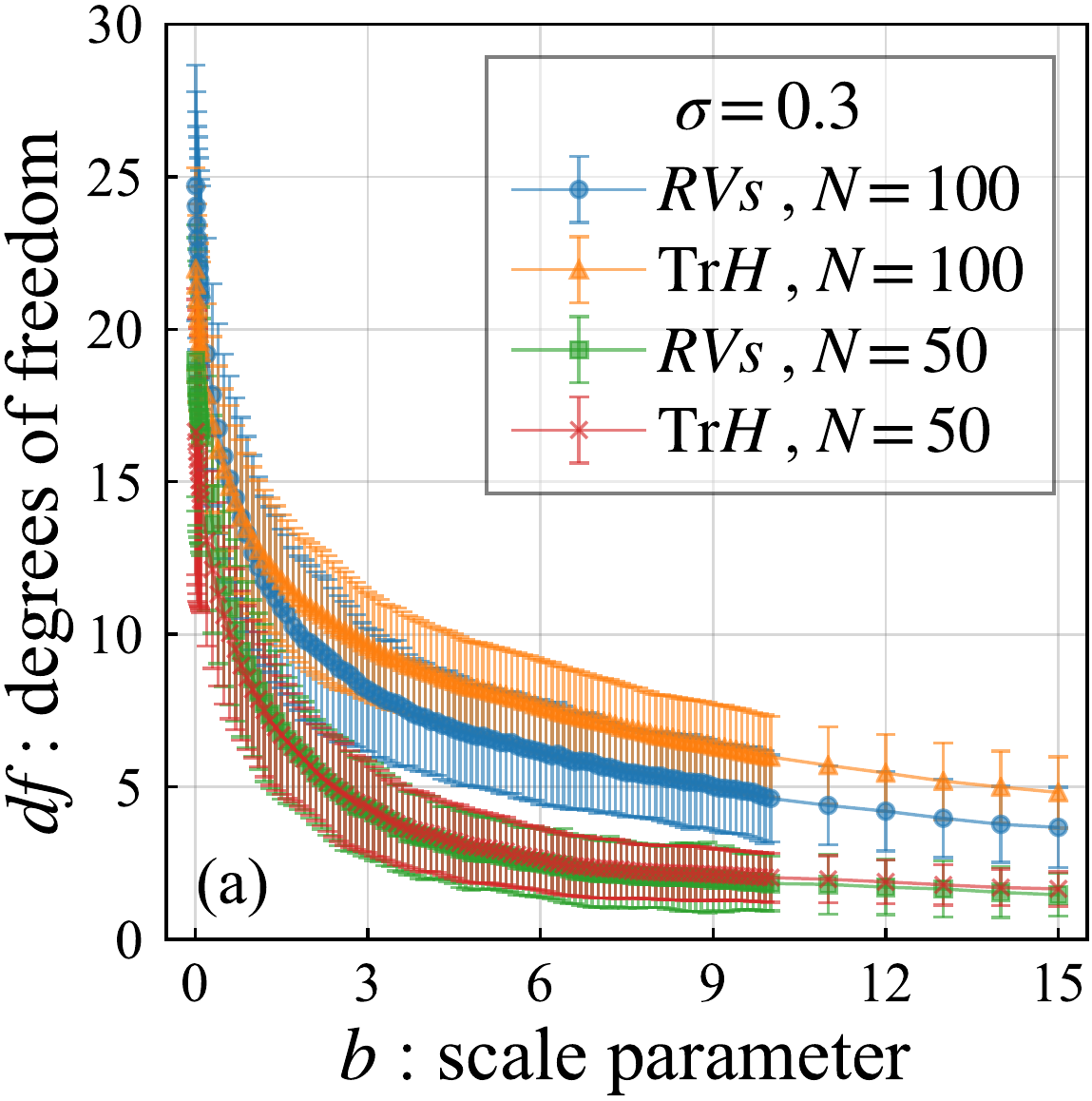}
      \end{center}
    \end{minipage} &
      \begin{minipage}[t]{0.5\hsize}
        \begin{center}
      \includegraphics[clip,scale=0.5]{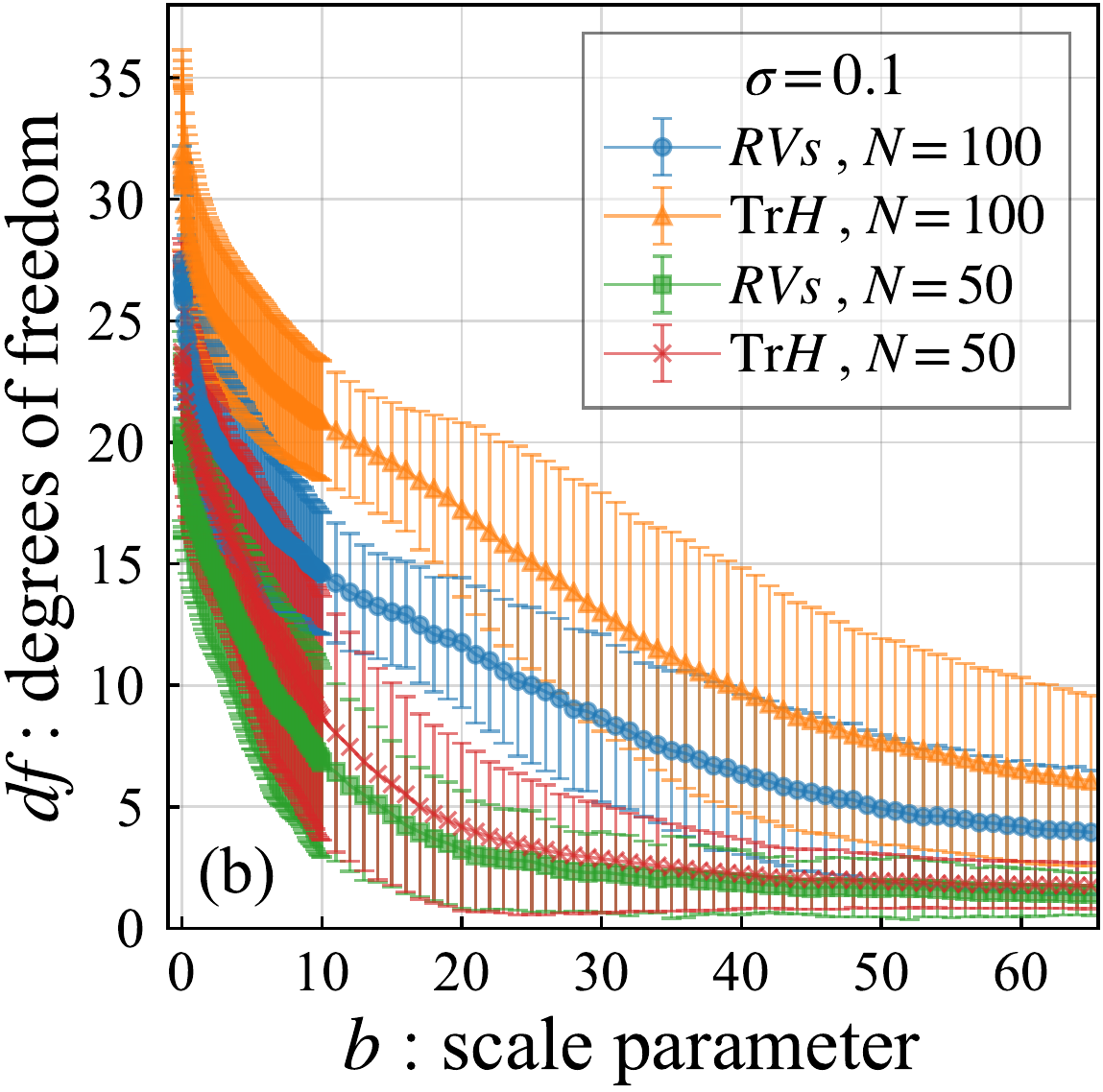}
    \end{center}
    \end{minipage}\\
  \end{tabular}
  \caption{\label{df_b_BUMPS} 
  The degrees of freedom $df$ against scale parameter $b$ in regression for BUMPS data using MK-VRVM with $\mathrm{InGam}(\bm{\alpha}|a=10^{-6},b)$. 
  (a) those in $\sigma=0.3$ case. (b) those in $\sigma=0.1$ case.
  }
\end{figure}
\begin{figure}[H]
  \begin{center}
  \includegraphics[clip,scale=0.5]{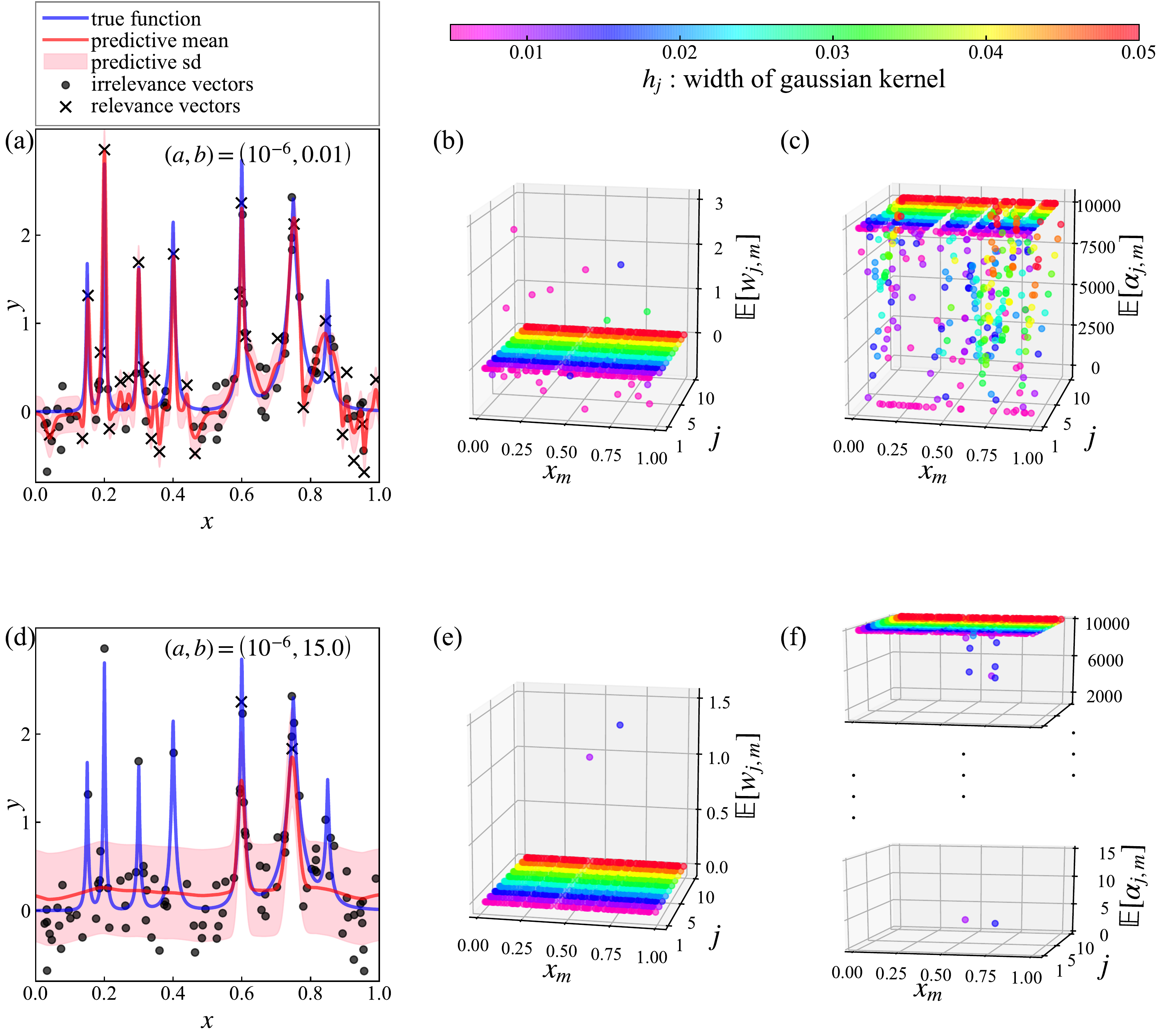}
  \caption{\label{regwalpha_b_BUMPS} 
  (a) An one example of estimated regression model in $100$ Monte Carlo trials for BUMPS data with $(N,\sigma)=(100,0.3)$ using MK-VRVM with $\mathrm{InGam}(\bm{\alpha}|a=10^{-6},b=0.01)$. 
  (b) and (c) corresponding $\mathbb{E}{[\bm{w}]}$ and $\mathbb{E}{[\bm{\alpha}]}$. 
  (d), (e), and (f) Those at $b=15.0$.}
    \end{center}
    \end{figure}
    \begin{figure}[H]
  \begin{tabular}{cc}
    \begin{minipage}[t]{0.5\hsize}
      \begin{center}
      \includegraphics[clip,scale=0.5]{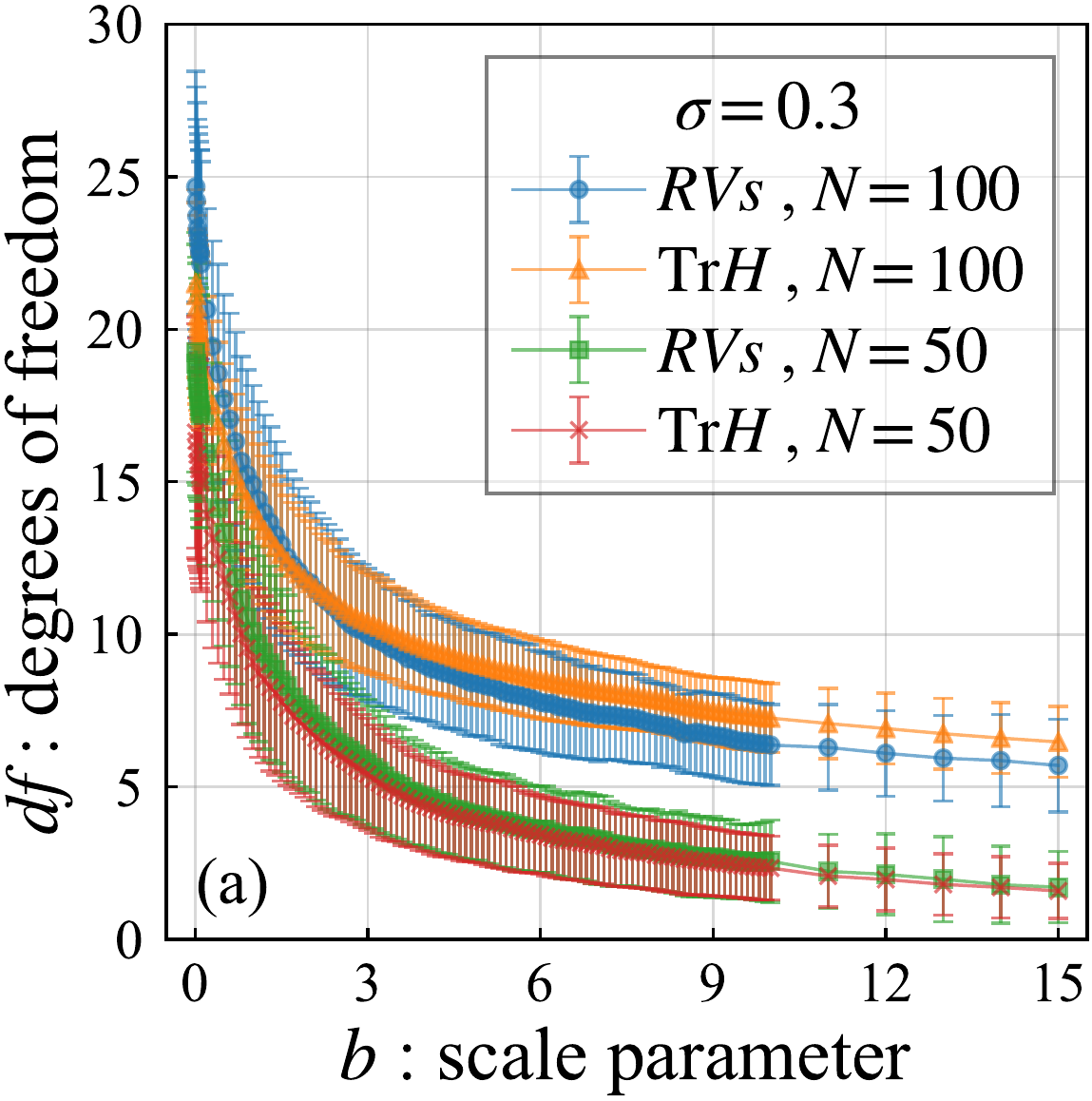}
      \end{center}
    \end{minipage} &
    \begin{minipage}[t]{0.5\hsize}
      \begin{center}
      \includegraphics[clip,scale=0.5]{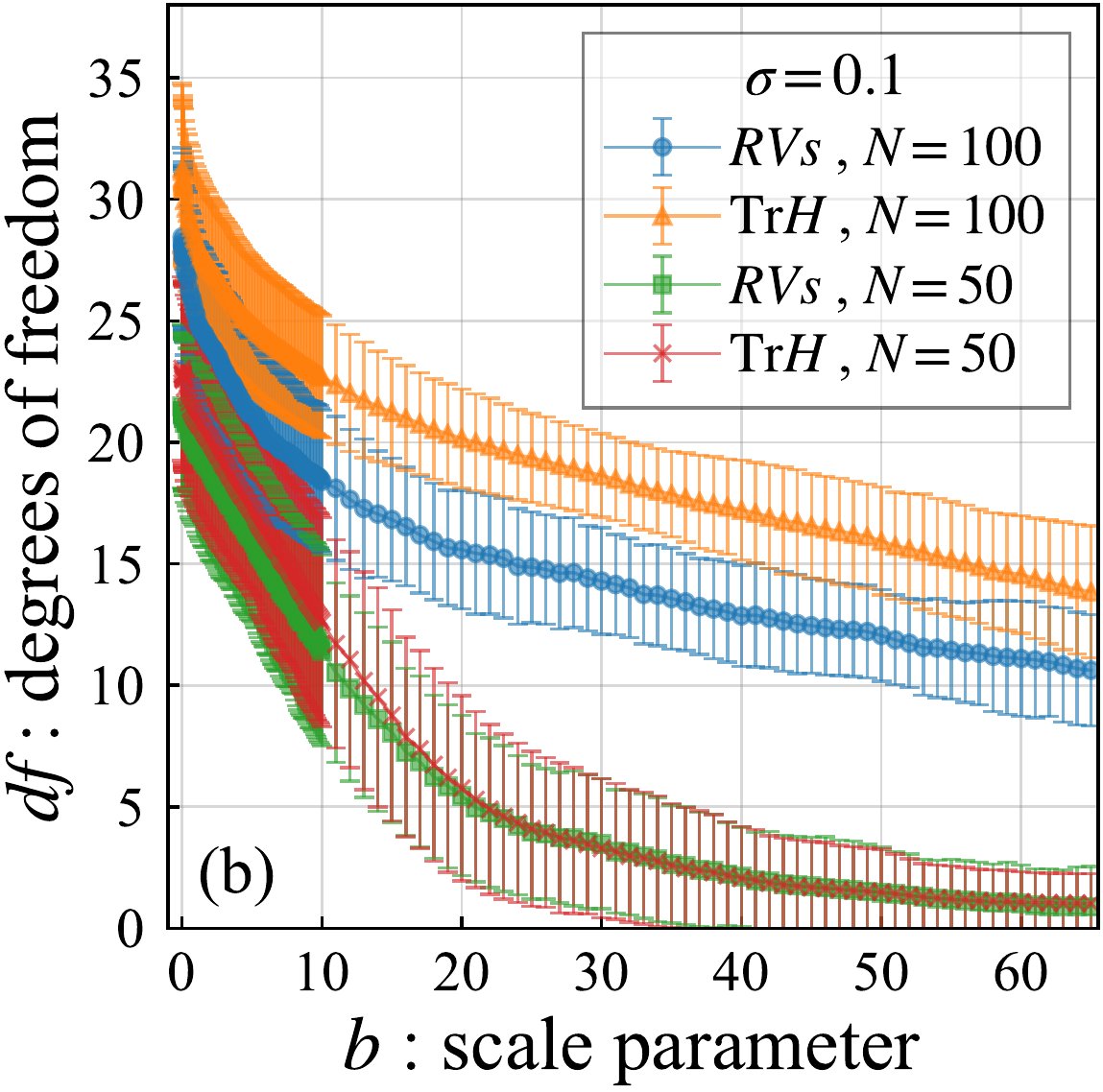}
      \end{center}
    \end{minipage}\\
  \end{tabular}
  \caption{\label{df_b_DOPPLER} The degrees of freedom $df$ against scale parameter $b$ in regression for DOPPLER data using MK-VRVM with $\mathrm{InGam}(\bm{\alpha}|a=10^{-6},b)$. 
  (a) those in $\sigma=0.3$ case. (b) those in $\sigma=0.1$ case.}
\end{figure}

\begin{figure}[H]
  \begin{center}
  \includegraphics[clip,scale=0.5]{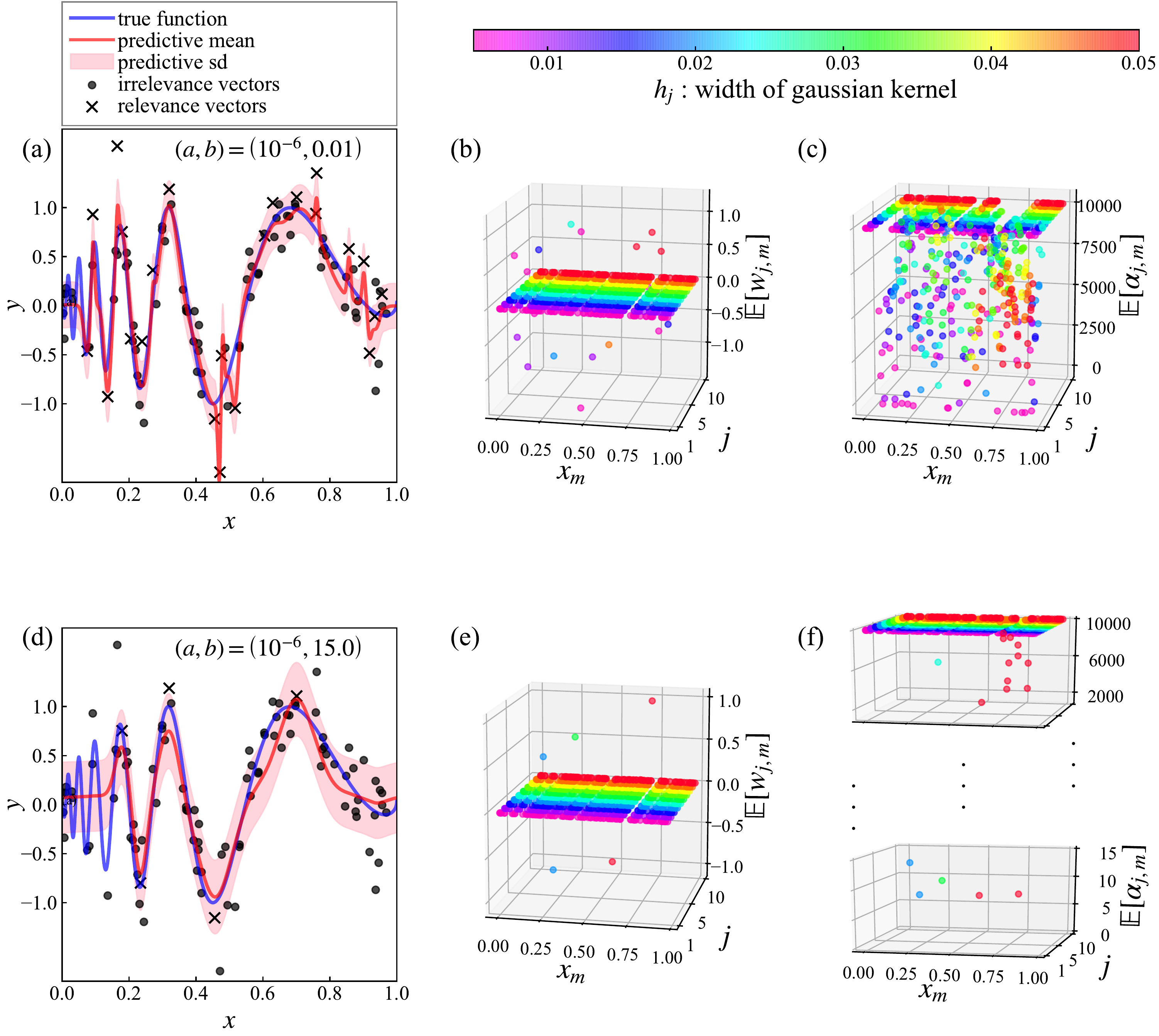}
  \caption{\label{regwalpha_b_DOPPLER} 
  (a) An one example of estimated regression model in $100$ Monte Carlo trials for DOPPLER data with $(N,\sigma)=(100,0.3)$ using MK-VRVM with $\mathrm{InGam}(\bm{\alpha}|a=10^{-6},b=0.01)$. 
  (b) and (c) corresponding $\mathbb{E}{[\bm{w}]}$ and $\mathbb{E}{[\bm{\alpha}]}$. 
  (d), (e), and (f) Those at $b=15.0$.}
    \end{center}
    \end{figure}
As shown in Fig. \ref{df_b_BUMPS}(a), the degrees of freedom increase 
and estimated model causes over-fitting (see Fig. \ref{regwalpha_b_BUMPS}(a)(b)(c)) 
when scale parameter $b$ is close to zero. 
We recall that $\mathrm{InGam}(\bm{\alpha}|a\sim 0,b\sim 0)$ 
correspond to non-informative hyperprior. 
Conversely, $df$ decrease and estimated model causes under-fitting 
(see Fig. \ref{regwalpha_b_BUMPS}(d)) when scale parameter $b$ is away from zero. 
The hyperprior $\mathrm{InGam}(\bm{\alpha}|a\sim 0,b\neq 0)$ 
contains the information that makes the model sparse. 
This property is visually confirmed in Figs. \ref{regwalpha_b_BUMPS}(e)(f). 
These results are agreement with $p(w_m)$ and $p(\alpha_m)$ shown in Figs. \ref{comp_priors}(a)(b). 
The Fig. \ref{df_b_BUMPS}(b) in the case of $\sigma=0.1$ also supports this property. 
According to Fig. \ref{df_b_BUMPS}(b), the large $b$ 
seems to be required for decreasing $df$ compared with Fig. \ref{df_b_BUMPS}(a) in the case of $\sigma=0.3$. 
The reason is that the true structure in $g(x)$ appears clearly in $\sigma=0.1$ 
and the large $df$ needed to follow the non-homogeneous structure. 

The similar interpretations above are also applied to Figs. \ref{df_b_DOPPLER} and \ref{regwalpha_b_DOPPLER}. 
The degrees of freedom against $b$ seem to differ slightly depending on the types of data. 
In other words, $df$ for DOPPLER is less likely to decrease than that for BUMPS. 
To see this property, for example, Fig. \ref{df_b_BUMPS}(b) is compared to Fig. \ref{df_b_DOPPLER}(b) in case of $N=100$. 
Figs. \ref{regwalpha_b_BUMPS}(d)(e)(f) and Figs. \ref{regwalpha_b_DOPPLER}(d)(e)(f) may also mean this property. 

It is not clear whether it makes sense to compare the values of $RVs$ with those of $\mathrm{Tr}\bm{H}$. 
The reason is that $RVs$ depends on the threshold of relevance vectors, 
and in our calculation, we do not use a strict threshold due to restriction on computational resource. 
Another reason is that $\bm{H}$ is calculated by applying the approximation to the integral with respect to $\beta$, 
as explained in Sec. \ref{sec:2-2}.
\subsection{Predictive Accuracy\label{sec:5-2}}
For the aforementioned calculations, we evaluate predictive accuracy when scale parameter $b$ is selected by 
$\left\{\mathrm{EPIC}_\gamma;\gamma=0,0.1,0.2,\cdots,1\right\}$. 
MK-VRVM and SK-VRVM with $\mathrm{Gam}(\bm{\alpha}|a=10^{-6},b=10^{-6})$ were used as comparison methods. 
The calculation stops when variational lower bounds between two consecutive iteration is smaller than $0.01$ for the former and $0.00001$ for the latter. 
We note that these comparison methods do not converge under the convergence criterion of MK-VRVM with $\mathrm{InGam}(\bm{\alpha}|a = 10^{-6},b)$. 
The MK-RVM and SK-RVM, which are estimated by second type maximum likelihood 
were also prepared for the comparison (see Refs. \cite{tipping2000relevance,tipping2001sparse} for the algorithm). 
The calculation stops when all of the updating parameters between two consecutive iteration are smaller than $0.01$ for the former and $0.005$ for the latter. 
The initialization, the upper limit of $\mathbb{E}{[\alpha_m]}$, and threshold determining the relevance vectors 
were the same as those of the proposed method. 
The widths of Gaussian kernels in the SK-VRVM and SK-RVM were set to $h=0.005, 0.05, 0.0275$, 
which represent the narrowest width, the widest width, and the middle width among the 10 Gaussian kernels used in the MK-VRVM and MK-RVM. 
Comparison methods for $\mathrm{EPIC}_\gamma$ were CV\cite{stone1974cross}, GCV\cite{craven1978smoothing,golub1979generalized}, and PIC\cite{genshiro1997information} 
which is identical to $\mathrm{EPIC}_{\gamma=0}$. 
The evaluation points are following:
\begin{itemize}
  \item the mean squared errors (MSE) defined as $\mathrm{MSE}=\sum_{n=1}^{N}\left\{\hat{y}_n-g(x_n)\right\}^2/(N-1)$ where 
  $\hat{\bm{y}}=(\hat{y}_1,\cdots,\hat{y}_N)^{\mathrm{T}}$ is obtained by Eq.\ \eqref{VByosokumean}, i.e., $\hat{\bm{y}}=\mathbb{E}{[\beta]}\bm{\Phi}\tilde{\bm{\Sigma}}\bm{\Phi}^{\mathrm{T}}\bm{y}$. 
  \item the predictive squared error (PSE) defined as $\mathrm{PSE}=\sum_{i=1}^{1000}\left\{\hat{z}_i-g(x^{\mathrm{new}}_i)\right\}^2/(1000-1)$ 
  where new input data $\left\{ x^{\mathrm{new}}_i;i=1,\cdots,1000 \right\}$ are uniformly spaced on $[0,1]$ and 
  $\hat{\bm{z}}=(\hat{z}_1,\cdots,\hat{z}_{1000})^{\mathrm{T}}$ is obtained by $\hat{\bm{z}}=\bm{\Phi}^{\mathrm{new}}\tilde{\bm{\mu}}$. 
  Here $\tilde{\bm{\mu}}$ and $\bm{\Phi}^{\mathrm{new}}$ are Eq.\ \eqref{meanw} and the design matrix of new inputs data, respectively. 
  \item the number of relevance vectors $RVs$. 
  Although not only $RVs$ but also $\mathrm{Tr}\bm{H}$ is used as $df$ of ${}_P\mathrm{C}_{df}$ in $\mathrm{EPIC}_\gamma$, 
  henceforth, the sparsity in estimated model is represented by $RVs$ for simplicity. 
  \item the percentage of sparsity defined as $100\times(RVs/P)$ where $P=1+NJ$ for the multiple kernel case and $P=1+N$ for the single kernel case. 
  \item the values of scale parameter $b$ selected by model selection criteria. 
\end{itemize}

We should mention that the proposed method does not always show good performance for all data types, $N$, and $\sigma$. 
The methods with the smallest PSE with respect to data types, $N$, and $\sigma$ 
are briefly shown in Table \ref{kani_tab_BUMPS}-\ref{kani_tab_HEAVISINE}. 
  \begin{table}[H]
    \caption{\label{kani_tab_BUMPS}Good performance methods for regression to BUMPS data}
\begin{center}
\begin{tabular}{c|c|c} 
 & $\sigma=0.1$ & $\sigma=0.3$ \\\hline
$N=50$ & MK-VRVM with $\mathrm{InGam}(\bm{\alpha}|a = 10^{-6},b)$ & MK-VRVM with $\mathrm{InGam}(\bm{\alpha}|a = 10^{-6},b)$ \\\hline
$N=100$ & MK-RVM & MK-VRVM with $\mathrm{InGam}(\bm{\alpha}|a = 10^{-6},b)$ \\
\end{tabular}
\end{center}
\end{table}
  \begin{table}[H]
        \caption{\label{kani_tab_DOPPLER}Good performance methods for regression to DOPPLER data}
    \begin{center}
    \begin{tabular}{c|c|c} 
     & $\sigma=0.1$ & $\sigma=0.3$ \\\hline
    $N=50$ & MK-VRVM with $\mathrm{InGam}(\bm{\alpha}|a = 10^{-6},b)$ & MK-VRVM with $\mathrm{InGam}(\bm{\alpha}|a = 10^{-6},b)$ \\\hline
    $N=100$ & \begin{tabular}{c}MK-VRVM with $\mathrm{InGam}(\bm{\alpha}|a = 10^{-6},b)$ \\ and MK-RVM \end{tabular} & MK-VRVM with $\mathrm{InGam}(\bm{\alpha}|a = 10^{-6},b)$ \\
    \end{tabular}
    \end{center}
    \end{table}
   \begin{table}[H]
    \caption{\label{kani_tab_BLOCKS}Good performance methods for regression to BLOCKS data}
\begin{center}
\begin{tabular}{c|c|c} 
 & $\sigma=0.1$ & $\sigma=0.3$ \\\hline
$N=50$ & SK-RVM with $h=0.0275$ & SK-RVM with $h=0.0275$ \\\hline
$N=100$ & MK-VRVM with $\mathrm{InGam}(\bm{\alpha}|a = 10^{-6},b)$ &\begin{tabular}{c}MK-VRVM with $\mathrm{InGam}(\bm{\alpha}|a = 10^{-6},b)$\\and SK-RVM with $h=0.0275$\end{tabular} \\
\end{tabular}
\end{center}
\end{table}
    \begin{table}[H]
      \caption{\label{kani_tab_HEAVISINE}Good performance methods for regression to HEAVISINE data}
  \begin{center}
  \begin{tabular}{c|c|c} 
   & $\sigma=0.1$ & $\sigma=0.3$ \\\hline
  $N=50$ & SK-RVM with $h=0.0275$ & SK-RVM with $h=0.0275$ \\\hline
  $N=100$ & MK-VRVM with $\mathrm{InGam}(\bm{\alpha}|a = 10^{-6},b)$ &\begin{tabular}{c}SK-VRVM with gamma hyperprior \\and SK-RVM with $h=0.05$\end{tabular} \\
  \end{tabular}
  \end{center}
  \end{table}
According to Table \ref{kani_tab_BUMPS}-\ref{kani_tab_HEAVISINE}, 
our proposed method seems to show good performance for BUMPS and DOPPLER data. 
Henceforth, we show the concrete results for the BUMPS and DOPPLER data with $N=100,\sigma=0.3$ in detail. 
The numerical results are summarized in Table \ref{tab_BUMPSN100std03} for the BUMPS. 
The results of $\mathrm{EPIC}_\gamma$ that minimize PSE among $\left\{\mathrm{EPIC}_\gamma;\gamma=0,0.1,\cdots,1\right\}$ are listed in the table. 
Furthermore, the regression, corresponding $\mathbb{E}{[\bm{w}]}$ and $\mathbb{E}{[\bm{\alpha}]}$ 
are visualized. Those of MK-VRVM with $\mathrm{InGam}(\bm{\alpha}|a = 10^{-6},b)$ (proposed method) are shown in Fig. \ref{regwalpha_EPIC_BUMPS} 
when scale parameter $b$ is selected by $\mathrm{EPIC}_\gamma$. 
Those of MK-VRVM and SK-VRVM with $\mathrm{Gam}(\bm{\alpha}|a=10^{-6},b=10^{-6})$ (comparison methods) are shown in Fig. \ref{regwalpha_VmVmono_BUMPS}. 
The results for the DOPPLER are shown in Table \ref{tab_DOPPLERN100std03} and Figs. \ref{regwalpha_EPIC_DOPPLER} and \ref{regwalpha_VmVmono_DOPPLER}. 
\begin{table}[H]
  \caption{\label{tab_BUMPSN100std03}Results for regression to BUMPS data $(N = 100$, $\sigma = 0.3)$}
  \begin{center}\scalebox{0.65}{
  \begin{tabular}{ccccccccccccc} 
  method & $Bias$ & $df$ of ${}_P\mathrm{C}_{df}$ & IC & \multicolumn{2}{c}{$\mathrm{MSE}\times 10^{-2}$} & \multicolumn{2}{c}{$\mathrm{PSE}\times 10^{-2}$} & \multicolumn{2}{c}{$RVs$} & \% of sparsity & \multicolumn{2}{c}{$b$} \\\hline
  MK-VRVM with inverse gamma & $Bias_{\mathrm{true}}$ & n/a & $\mathrm{EPIC}_{0}$ & 3.817  & (1.007) & 9.161  & (2.871) & 21.86  & (3.59) & 2.184  & 0.153  & (0.238) \\
   &  & $RVs$ & $\mathrm{EPIC}_{0.4}$ & 2.963  & (0.816) & 8.362  & (2.795) & 8.67  & (2.23) & 0.866  & 2.268  & (1.232) \\
   &  & $\mathrm{Tr}\bm{H}$ & $\mathrm{EPIC}_{0.3}$ & 3.042  & (0.886) & 8.353  & (2.759) & 12.24  & (3.14) & 1.223  & 1.462  & (0.799) \\\cline{2-13}
   & $Bias_{\mathrm{plug}}$ & n/a & $\mathrm{EPIC}_{0}$ & 4.042  & (1.049) & 9.161  & (2.880) & 24.67  & (3.94) & 2.465  & 0.012  & (0.008) \\
   &  & $RVs$ & $\mathrm{EPIC}_{0.5}$ & 3.023  & (0.946) & 8.453  & (2.848) & 8.81  & (2.98) & 0.880  & 2.242  & (1.197) \\
   &  & $\mathrm{Tr}\bm{H}$ & $\mathrm{EPIC}_{0.6}$ & 3.012  & (0.927) & 8.462  & (2.781) & 9.35  & (2.92) & 0.934  & 2.912  & (1.487) \\\cline{2-13}
   & $Bias_{\mathrm{GIC}}$ & n/a & $\mathrm{EPIC}_{0}$ & 4.041  & (1.045) & 9.156  & (2.873) & 24.70  & (3.97) & 2.468  & 0.014  & (0.021) \\
   &  & $RVs$ & $\mathrm{EPIC}_{0.6}$ & 3.065  & (0.929) & 8.500  & (2.834) & 8.62  & (3.01) & 0.861  & 2.371  & (1.370) \\
   &  & $\mathrm{Tr}\bm{H}$ & $\mathrm{EPIC}_{0.7}$ & 3.016  & (0.923) & 8.445  & (2.800) & 9.32  & (2.87) & 0.931  & 3.179  & (1.655) \\\cline{2-13}
   & n/a & n/a & CV & 3.808  & (1.053) & 8.895  & (2.709) & 22.03  & (5.39) & 2.201  & 0.133  & (0.289) \\\cline{2-13}
   & n/a & n/a & GCV & 4.040  & (1.050) & 9.158  & (2.880) & 24.60  & (3.93) & 2.458  & 0.014  & (0.010) \\\hline
  MK-VRVM with gamma & n/a & n/a & n/a & 4.246  & (1.066) & 9.401  & (3.003) & 27.01  & (4.04) & 2.698  & n/a & n/a \\\hline
  SK-VRVM with gamma $h=0.005$ & n/a & n/a & n/a & 4.280  & (0.961) & 11.925  & (3.737) & 35.02  & (4.07) & 34.673  & n/a & n/a \\\hline
  SK-VRVM with gamma $h=0.0275$ & n/a & n/a & n/a & 7.383  & (2.580) & 12.664  & (2.626) & 9.84  & (2.35) & 9.743  & n/a & n/a \\\hline
  SK-VRVM with gamma $h=0.05$ & n/a & n/a & n/a & 12.282  & (3.804) & 16.405  & (1.793) & 7.23  & (2.09) & 7.158  & n/a & n/a \\\hline
  MK-RVM  & n/a & n/a & n/a & 4.173  & (1.059) & 9.329  & (2.952) & 26.86  & (3.92) & 2.683  & n/a & n/a \\\hline
  SK-RVM $h=0.005$ & n/a & n/a & n/a & 4.274  & (0.956) & 11.917  & (3.738) & 33.91  & (4.16) & 33.574  & n/a & n/a \\\hline
  SK-RVM $h=0.0275$ & n/a & n/a & n/a & 7.399  & (2.579) & 12.708  & (2.714) & 9.49  & (1.71) & 9.396  & n/a & n/a \\\hline
  SK-RVM $h=0.05$ & n/a & n/a & n/a & 12.230  & (3.785) & 16.378  & (1.833) & 6.54  & (1.37) & 6.475  & n/a & n/a \\\hline
  \end{tabular}}
  \end{center}
  \end{table}
  \begin{figure}[H]
    \begin{center}
    \includegraphics[clip,scale=0.7]{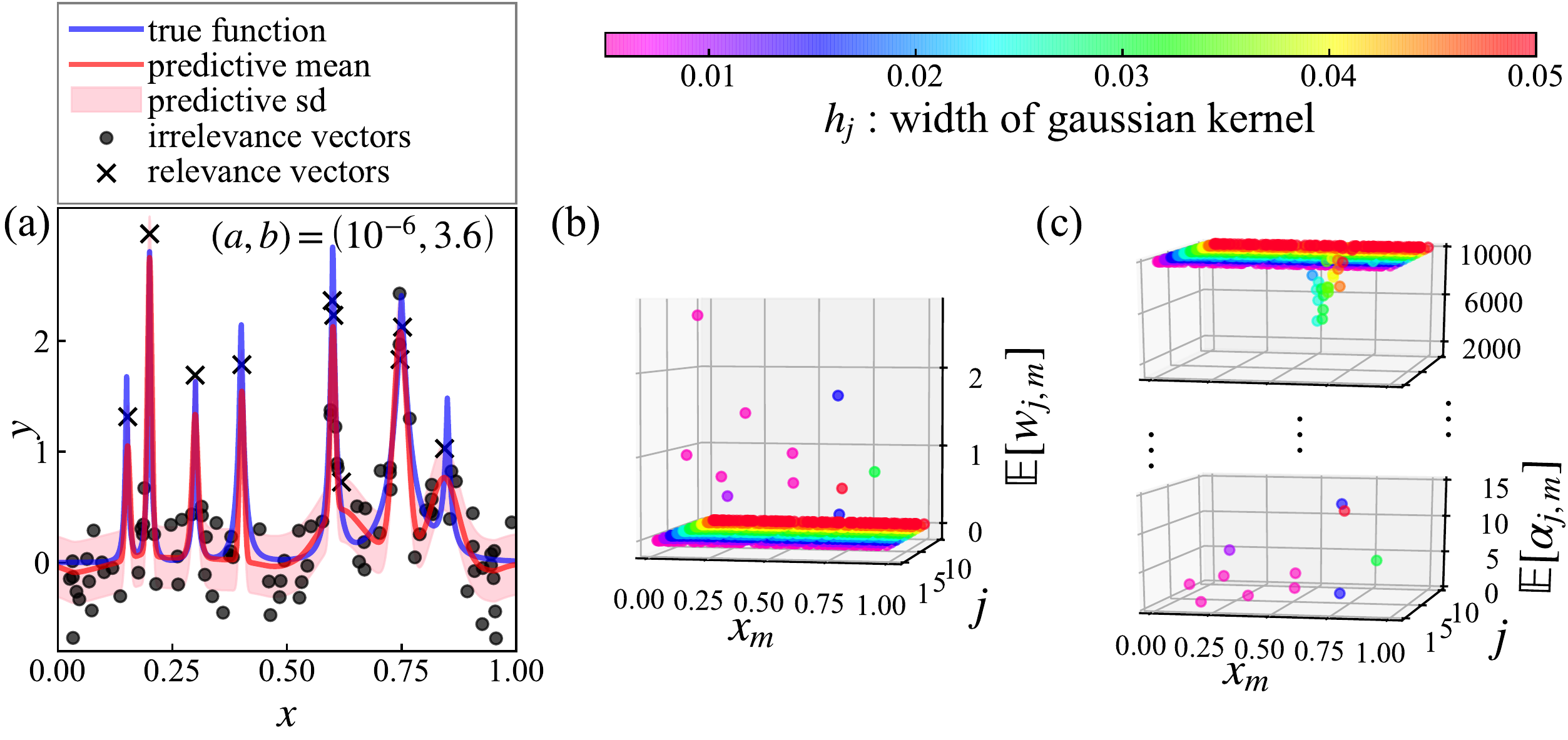}
    \caption{\label{regwalpha_EPIC_BUMPS} 
    (a) An one example of estimated regression model in $100$ Monte Carlo trials for BUMPS data with $(N,\sigma)=(100,0.3)$ 
    using MK-VRVM with $\mathrm{InGam}(\bm{\alpha}|a = 10^{-6},b)$. The scale parameter $b$ is selected by $\mathrm{EPIC}_{0.7}$ with $Bias_{\mathrm{GIC}}$ and ${}_P\mathrm{C}_{df=\mathrm{Tr}\bm{H}}$. 
  (b) and (c) corresponding $\mathbb{E}{[\bm{w}]}$ and $\mathbb{E}{[\bm{\alpha}]}$.
    }
    \end{center}
    \end{figure}
    \begin{figure}[H]
    \begin{center}
      \includegraphics[clip,scale=0.6]{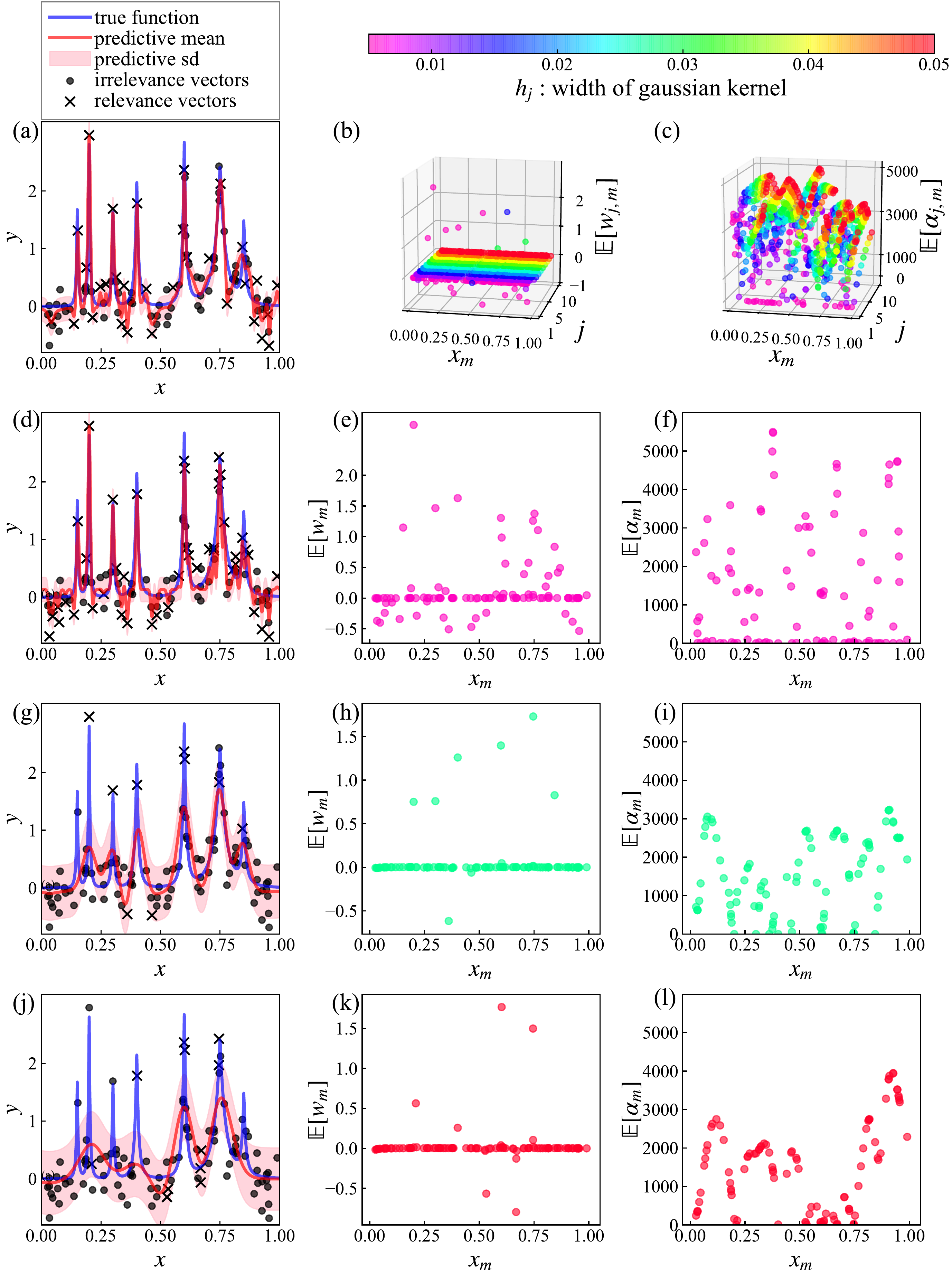}
      \caption{\label{regwalpha_VmVmono_BUMPS} 
      (a) An one example of estimated regression model in $100$ Monte Carlo trials for BUMPS data with $(N,\sigma)=(100,0.3)$ 
      using MK-VRVM with $\mathrm{Gam}(\bm{\alpha}|a=10^{-6},b=10^{-6})$.  
      (b) and (c) corresponding $\mathbb{E}{[\bm{w}]}$ and $\mathbb{E}{[\bm{\alpha}]}$.
      (d), (e), and (f) Those for SK-VRVM with $\mathrm{Gam}(\bm{\alpha}|a=10^{-6},b=10^{-6})$ in the case of $h=0.005$. 
      (g), (h), and (i) Those in the case of $h=0.0275$. 
      (j), (k), and (l) Those in the case of $h=0.05$.}
    \end{center}
        \end{figure}
\begin{table}[H]
  \caption{\label{tab_DOPPLERN100std03}Results for regression to DOPPLER data $(N = 100$, $\sigma = 0.3)$}
  \begin{center}\scalebox{0.68}{
  \begin{tabular}{ccccccccccccc} 
  method & $Bias$ & $df$ of ${}_P\mathrm{C}_{df}$ & IC & \multicolumn{2}{c}{$\mathrm{MSE}\times 10^{-2}$} & \multicolumn{2}{c}{$\mathrm{PSE}\times 10^{-2}$} & \multicolumn{2}{c}{$RVs$} & \% of sparsity & \multicolumn{2}{c}{$b$} \\\hline
  MK-VRVM with inverse gamma & $Bias_{\mathrm{true}}$ & n/a & $\mathrm{EPIC}_{0}$ & 3.849  & (0.910) & 5.325  & (1.403) & 22.32  & (3.18) & 2.230  & 0.213  & (0.475) \\
   &  & $RVs$ & $\mathrm{EPIC}_{0.4}$ & 3.047  & (0.787) & 4.444  & (1.384) & 9.64  & (2.34) & 0.963  & 2.870  & (1.408) \\
   &  & $\mathrm{Tr}\bm{H}$ & $\mathrm{EPIC}_{0.4}$ & 2.998  & (0.724) & 4.352  & (1.355) & 11.70  & (2.17) & 1.169  & 2.424  & (1.205) \\\cline{2-13}
   & $Bias_{\mathrm{plug}}$ & n/a & $\mathrm{EPIC}_{0}$ & 4.066  & (0.922) & 5.325  & (1.400) & 24.66  & (3.77) & 2.464  & 0.011  & (0.005) \\
   &  & $RVs$ & $\mathrm{EPIC}_{0.5}$ & 3.168  & (0.872) & 4.549  & (1.450) & 10.24  & (3.62) & 1.023  & 2.626  & (1.337) \\
   &  & $\mathrm{Tr}\bm{H}$ & $\mathrm{EPIC}_{0.5}$ & 3.145  & (0.841) & 4.472  & (1.410) & 13.13  & (3.96) & 1.312  & 2.052  & (1.417) \\\cline{2-13}
   & $Bias_{\mathrm{GIC}}$ & n/a & $\mathrm{EPIC}_{0}$ & 4.064  & (0.919) & 5.322  & (1.398) & 24.68  & (3.75) & 2.466  & 0.012  & (0.006) \\
   &  & $RVs$ & $\mathrm{EPIC}_{0.6}$ & 3.182  & (0.862) & 4.572  & (1.413) & 9.71  & (3.09) & 0.970  & 3.121  & (1.989) \\
   &  & $\mathrm{Tr}\bm{H}$ & $\mathrm{EPIC}_{0.6}$ & 3.119  & (0.827) & 4.467  & (1.420) & 12.85  & (4.03) & 1.284  & 2.496  & (2.012) \\\cline{2-13}
   & n/a & n/a & CV & 3.784  & (0.930) & 5.028  & (1.447) & 21.79  & (4.77) & 2.177  & 0.175  & (0.270) \\\cline{2-13}
   & n/a & n/a & GCV & 4.065  & (0.922) & 5.330  & (1.409) & 24.67  & (3.76) & 2.465  & 0.012  & (0.007) \\\hline
  MK-VRVM with gamma & n/a & n/a & n/a & 4.256  & (0.953) & 5.665  & (1.606) & 27.18  & (3.80) & 2.715  & n/a & n/a \\\hline
  SK-VRVM with gamma h=0.005 & n/a & n/a & n/a & 5.685  & (0.943) & 13.333  & (1.875) & 45.31  & (4.23) & 44.861  & n/a & n/a \\\hline
  SK-VRVM with gamma h=0.0275 & n/a & n/a & n/a & 3.378  & (0.897) & 4.999  & (1.169) & 14.32  & (2.39) & 14.178  & n/a & n/a \\\hline
  SK-VRVM with gamma h=0.05 & n/a & n/a & n/a & 4.858  & (1.375) & 6.486  & (1.364) & 9.24  & (2.56) & 9.149  & n/a & n/a \\\hline
  MK-RVM  & n/a & n/a & n/a & 4.192  & (0.918) & 5.561  & (1.548) & 26.55  & (3.84) & 2.652  & n/a & n/a \\\hline
  SK-RVM h=0.005 & n/a & n/a & n/a & 5.693  & (0.941) & 13.306  & (1.896) & 43.96  & (4.28) & 43.525  & n/a & n/a \\\hline
  SK-RVM h=0.0275 & n/a & n/a & n/a & 3.367  & (0.892) & 4.975  & (1.162) & 13.44  & (1.75) & 13.307  & n/a & n/a \\\hline
  SK-RVM h=0.05 & n/a & n/a & n/a & 4.844  & (1.361) & 6.456  & (1.363) & 8.17  & (1.48) & 8.089  & n/a & n/a \\\hline
  \end{tabular}}
  \end{center}
  \end{table}
  \begin{figure}[H]
    \begin{center}
\includegraphics[clip,scale=0.7]{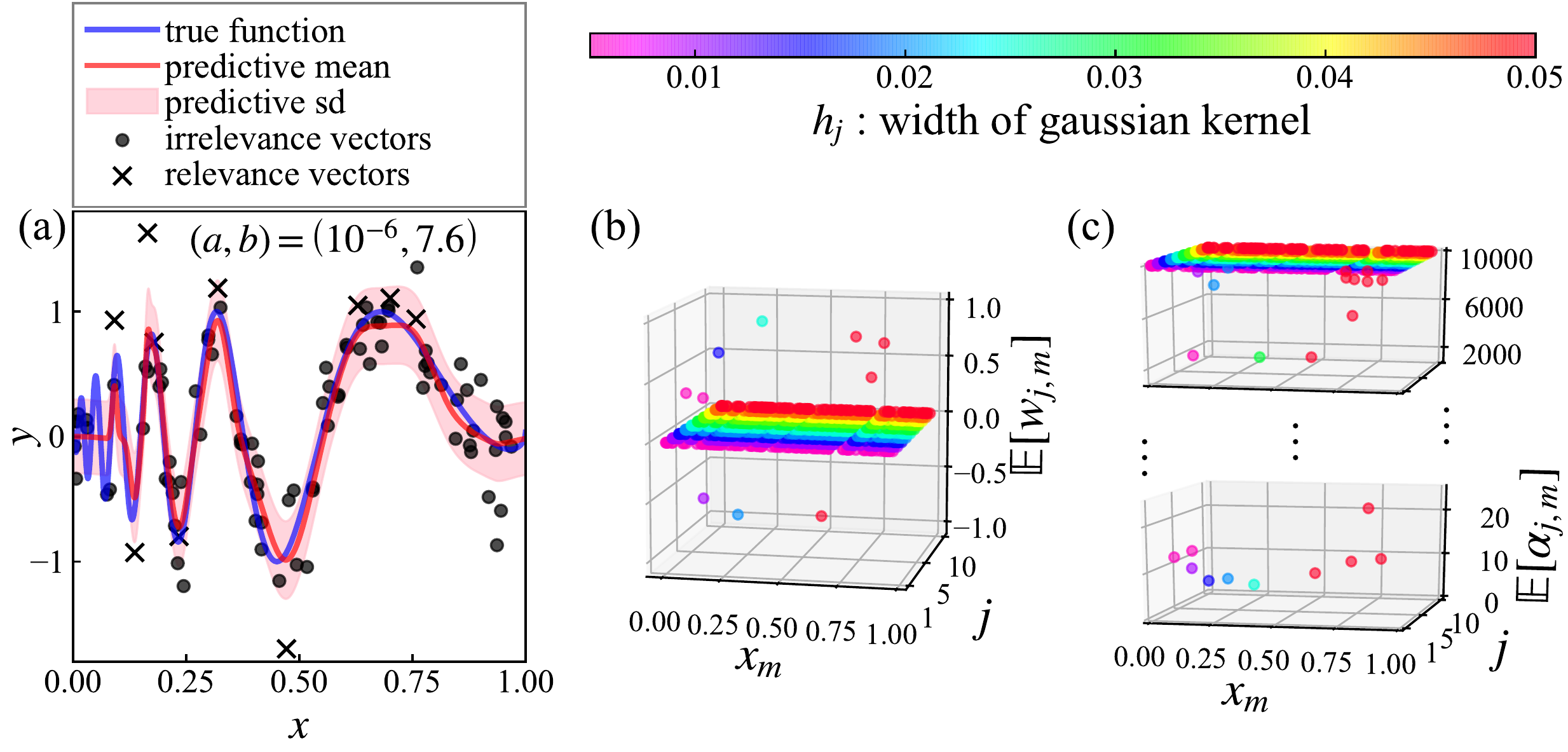}
\caption{\label{regwalpha_EPIC_DOPPLER} 
(a) An one example of estimated regression model in $100$ Monte Carlo trials for DOPPLER data with $(N,\sigma)=(100,0.3)$ 
    using MK-VRVM with $\mathrm{InGam}(\bm{\alpha}|a = 10^{-6},b)$. The scale parameter $b$ is selected by $\mathrm{EPIC}_{0.6}$ with $Bias_{\mathrm{GIC}}$ and ${}_P\mathrm{C}_{df=\mathrm{Tr}\bm{H}}$. 
  (b) and (c) corresponding $\mathbb{E}{[\bm{w}]}$ and $\mathbb{E}{[\bm{\alpha}]}$.
    }
  \end{center}
  \end{figure}
  \begin{figure}[H]
  \begin{center}
    \includegraphics[clip,scale=0.6]{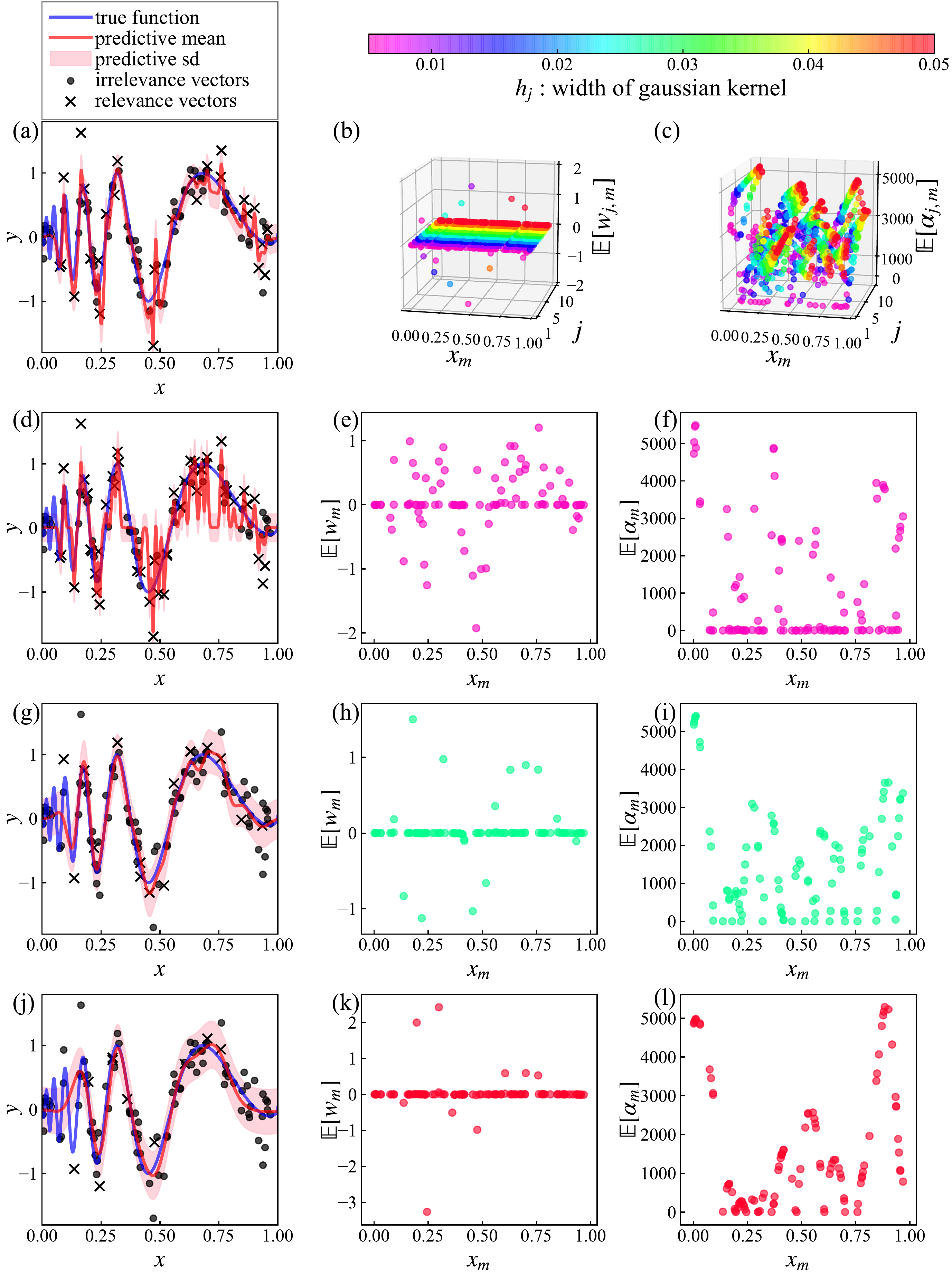}
    \caption{\label{regwalpha_VmVmono_DOPPLER} 
    (a) An one example of estimated regression model in $100$ Monte Carlo trials for DOPPLER data with $(N,\sigma)=(100,0.3)$ 
      using MK-VRVM with $\mathrm{Gam}(\bm{\alpha}|a=10^{-6},b=10^{-6})$. 
      (b) and (c) corresponding $\mathbb{E}{[\bm{w}]}$ and $\mathbb{E}{[\bm{\alpha}]}$.
      (d), (e), and (f) Those for SK-VRVM with $\mathrm{Gam}(\bm{\alpha}|a=10^{-6},b=10^{-6})$ in the case of $h=0.005$. 
      (g), (h), and (i) Those in the case of $h=0.0275$. 
            (j), (k), and (l) Those in the case of $h=0.05$.}
      \end{center}
      \end{figure}
There are primarily two points which we have to explain in terms of predictive accuracy. 
First, proposed regression methods is compared with 
other regression methods. 
Second, we compare $\mathrm{EPIC}_{\gamma}$ with other model selection criteria. 
In both comparisons, our attention should be paid to the results of 
$\mathrm{EPIC}_{\gamma}$ corrected by $Bias_{\mathrm{GIC}}$. 
This is because it can be practical even if the true distribution is not clear. 
However, the following interpretations are the same regardless of 
whether bias correction term in $\mathrm{EPIC}_{\gamma}$ is $Bias_{\mathrm{true}}$, $Bias_{\mathrm{plug}}$, or $Bias_{\mathrm{GIC}}$. 
The interpretations are also regardless of 
whether $df$ of ${}_P\mathrm{C}_{df}$ in $\mathrm{EPIC}_{\gamma}$ is $RVs$ or $\mathrm{Tr}\bm{H}$. 

As shown in Table \ref{tab_BUMPSN100std03}, 
we observe that the MSE and PSE obtained by the MK-VRVM with $\mathrm{InGam}(\bm{\alpha}|a=10^{-6},b)$ 
whose $b$ is selected by $\mathrm{EPIC}_\gamma$ is smaller than those of the other regression methods. 
The pictures of regression could be confirmed through Figs. \ref{regwalpha_EPIC_BUMPS} and \ref{regwalpha_VmVmono_BUMPS}. 
The MK-VRVM with gamma hyperprior involves many relevance vectors and cause over-fitting (see Table \ref{tab_BUMPSN100std03} and 
Fig. \ref{regwalpha_VmVmono_BUMPS}(a))
The corresponding $\mathbb{E}{[\bm{w}]}$ and $\mathbb{E}{[\bm{\alpha}]}$ 
are visually confirmed through Figs. \ref{regwalpha_VmVmono_BUMPS}(b)(c) together with widths of Gaussian kernels. 
In the non-zero weights, the Gaussian kernels with narrow widths seem to be selected, which contributes to over-fitting. 
Conversely, MK-VRVM with $\mathrm{InGam}(\bm{\alpha}|a=10^{-6},b)$ has the moderate number of relevance vectors 
(see Table \ref{tab_BUMPSN100std03}). 
As shown in Figs. \ref{regwalpha_EPIC_BUMPS}(a)(b)(c), 
the necessary and sufficient widths and number of Gaussian kernels are selected to follow the non-homogeneous structure. 
In consequence, the regression model with good predictive accuracy is obtained. 
Figs. \ref{regwalpha_VmVmono_BUMPS}(g)-(l) show 
that effective regression cannot be performed with only single Gaussian kernel with $h=0.0275$ or $0.05$. 
The SK-VRVM of $h=0.005$ with gamma hyperprior succeeds 
in the less smooth region, 
 but causes over-fitting in the smooth region (Figs. \ref{regwalpha_VmVmono_BUMPS}(d)-(f)). 
Here we mention that SK-VRVM with $\mathrm{InGam}(\bm{\alpha}|a=10^{-6},b)$ which is not used in our study 
might perform well when $b$ is selected by $\mathrm{EPIC}_\gamma$. 
Although the  MK-RVM and SK-RVM have a slightly better predictive accuracy than MK-VRVM and SK-VRVM with gamma hyperprior, 
they does not exceed the predictive accuracy of proposed method. 

Next, $\mathrm{EPIC}_\gamma$ is compared with other model selection criteria. 
As shown in Table \ref{tab_BUMPSN100std03}, 
$\mathrm{EPIC}_\gamma$ has the good predictive accuracy than $\mathrm{CV}$, $\mathrm{GCV}$, and $\mathrm{PIC}$. 
Ref. \cite{chen2008extended} pointed out that 
the $\mathrm{CV}$ and the $\mathrm{GCV}$ tend to select a model with many spurious covariates. 
In the terminology of RVM, the covariates correspond to the relevance vectors. 
This property is confirmed in our results together with traditional $\mathrm{PIC}$. 
The $\mathrm{CV}$, $\mathrm{GCV}$, and $\mathrm{PIC}$ seem to select the scale parameter $b$ close to zero (see Table \ref{tab_BUMPSN100std03}). 
We have already shown that $\mathrm{InGam}(\bm{\alpha}|a\sim 0,b\sim 0)$ 
often increase $df$ and cause over-fitting in Figs. \ref{df_b_BUMPS}-\ref{regwalpha_b_DOPPLER}. 
The $\mathrm{EPIC}_\gamma$, on the other hand, selects moderate value of $b$ 
so that the model contains the necessary and sufficient number of relevance vectors. 

The above interpretations are also applied to the results of DOPPLER data, i.e., 
Table \ref{tab_DOPPLERN100std03}, Figs.\ \eqref{regwalpha_EPIC_DOPPLER} and \eqref{regwalpha_VmVmono_DOPPLER}. 
The effect of multiple kernel method is clearly visible in Fig. \ref{regwalpha_EPIC_DOPPLER} than in Fig. \ref{regwalpha_EPIC_BUMPS}. 
The Gaussian kernels with wide width are selected in the smooth region of $x>0$, 
while those with the narrow width are used in the less smooth region of $x<0$ (see Fig. \ref{regwalpha_EPIC_DOPPLER}(b)). 
\subsection{Comparison of Bias Correction of Log-Likelihood\label{sec:5-3}}
This subsection compares the bias correction terms of $\mathrm{EPIC}_\gamma$. 
For this comparison, our attention should be paid to scale parameter $b$ selected by $\mathrm{PIC}$ for simplicity. 
As shown in Tables \ref{tab_BUMPSN100std03} and \ref{tab_DOPPLERN100std03}, 
PICs corrected by $\mathrm{Bias}_{\mathrm{true}}$ and $\mathrm{Bias}_{\mathrm{GIC}}$ 
do not select the same scale parameter $b$. 
We do not emphasize this difference, 
because $\mathrm{EPIC}_\gamma$ corrected by $\mathrm{Bias}_{\mathrm{GIC}}$ has second-order accuracy like 
AIC, TIC, and GIC \cite{konishi2008information}. 
This means that correcting with $\mathrm{Bias}_{\mathrm{GIC}}$ is not a complete correction, 
and the same $b$ is not always selected. 
There is another possibility that the asymptotic theory 
used in the derivation of $\mathrm{Bias}_{\mathrm{GIC}}$ does not hold. 
In the multiple kernel method, the number of parameters in model is $P=1+NJ$. 
When $N$ is increased, $P$ is further increased with respect to $N$. 
In this case, $\mathrm{Bias}_{\mathrm{GIC}}$ might be no longer valid as bias correction of log-likelihood. 
To overcome this situation, we need information criteria that can be used regardless of whether the asymptotic theory holds. 
For example, the WAIC \cite{watanabe2010asymptotic} and WBIC \cite{watanabe2013widely}, 
which have been known to be valid in such a situation, are representative candidates. 
Solving this problem is allocated for future works. 
\subsection{Values of $\gamma$ and Selected Model\label{sec:5-4}}
As mentioned earlier, $\gamma$ is not usually selected objectively. 
We specify 11 $\gamma$ and scale parameter $b$ is selected by $\mathrm{EPIC}_\gamma$. 
In this case, it is important to investigate the relation between $\gamma$ and the selected models 
because this is also discussed in model selection by $\mathrm{EBIC}_\gamma$ \cite{chen2008extended,foygel2010extended,chen2012extended}.
For accurate investigation, we focus on model selection using $\mathrm{EPIC}_\gamma$ corrected by $Bias_{\mathrm{true}}$. 
The following interpretations are the same regardless of 
whether $df$ of ${}_P\mathrm{C}_{df}$ in $\mathrm{EPIC}_{\gamma}$ is $RVs$ or $\mathrm{Tr}\bm{H}$. 
The selected scale parameter $b$, $RVs$, MSE, and PSE against $\gamma$ are plotted 
in Figs. \ref{box_BUMPSN100std03} for the BUMPS and \ref{box_DOPPLERN100std03} for the DOPPLER. 
\begin{figure}[H]
  \begin{tabular}{cc}
    \hspace{-15pt}
    \begin{minipage}[t]{0.5\hsize}
      \begin{center}
      \includegraphics[clip,scale=0.55]{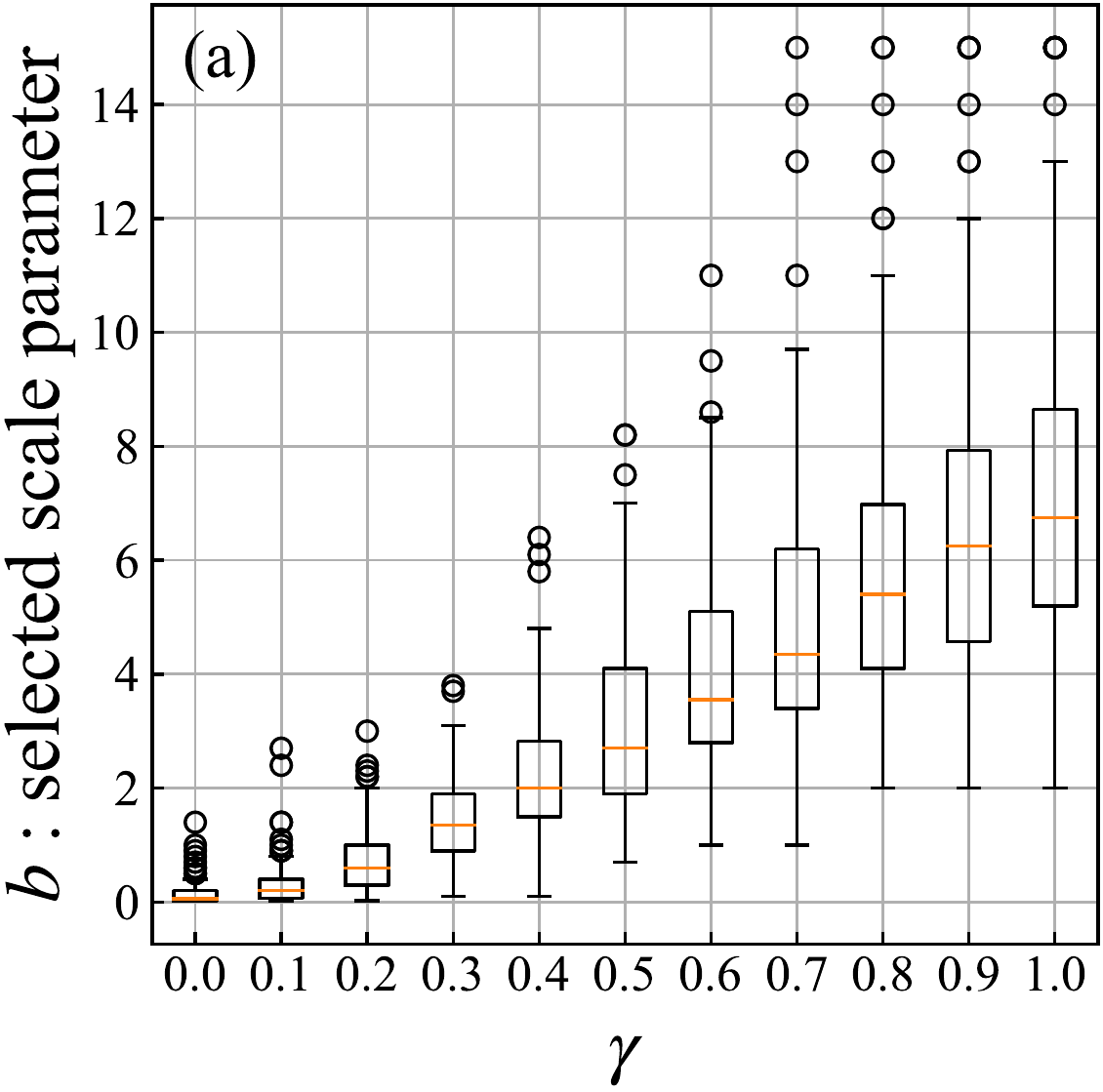}
      \end{center}
    \end{minipage} &
    \hspace{-10pt}
    \begin{minipage}[t]{0.5\hsize}
      \begin{center}
      \includegraphics[clip,scale=0.55]{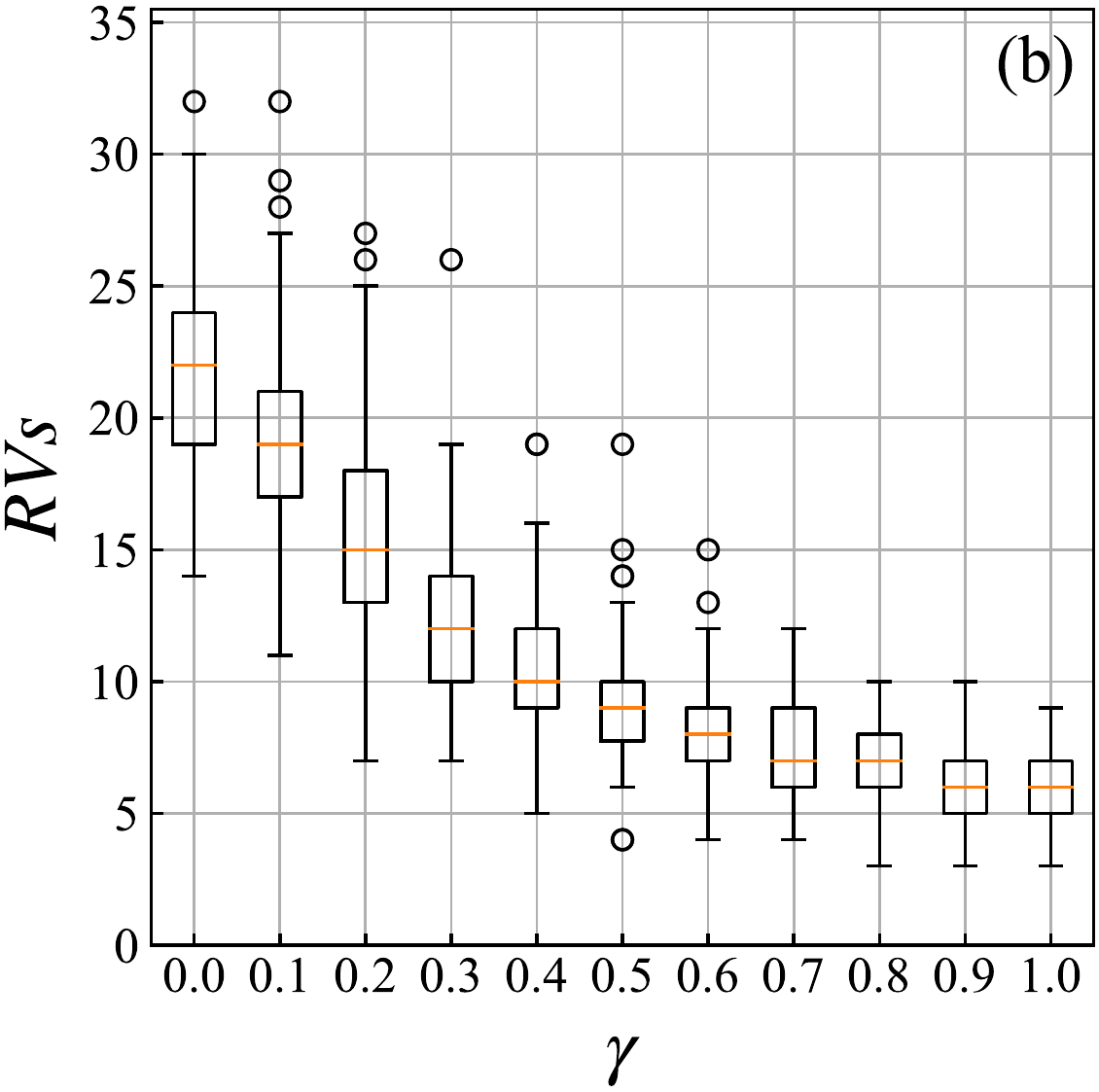}
    \end{center}
    \end{minipage}\\
    \hspace{-15pt}
    \begin{minipage}[t]{0.5\hsize}
      \begin{center}
      \includegraphics[clip,scale=0.55]{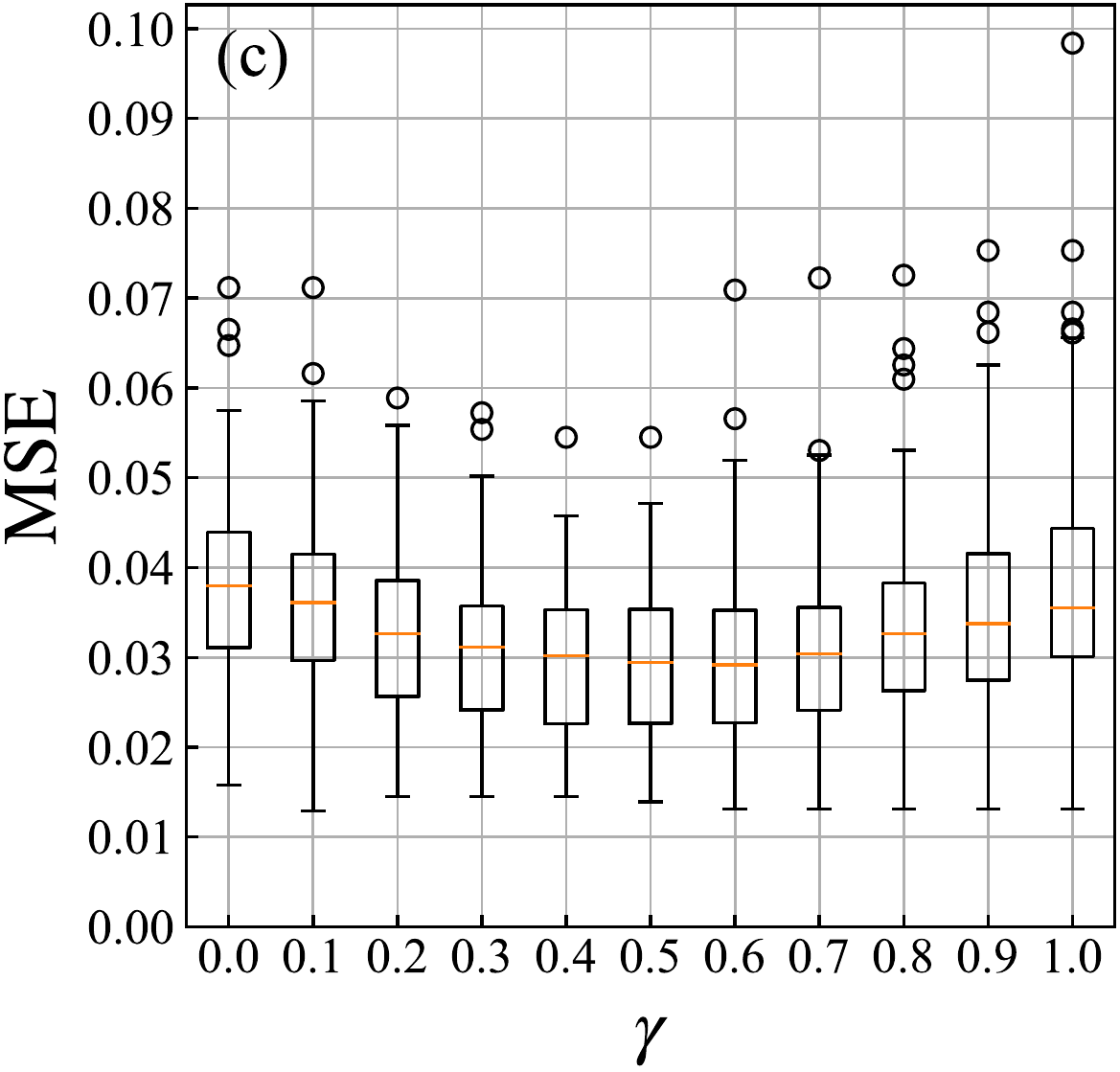}
    \end{center}
    \end{minipage} &
    \hspace{-10pt}
    \begin{minipage}[t]{0.5\hsize}
      \begin{center}
      \includegraphics[clip,scale=0.55]{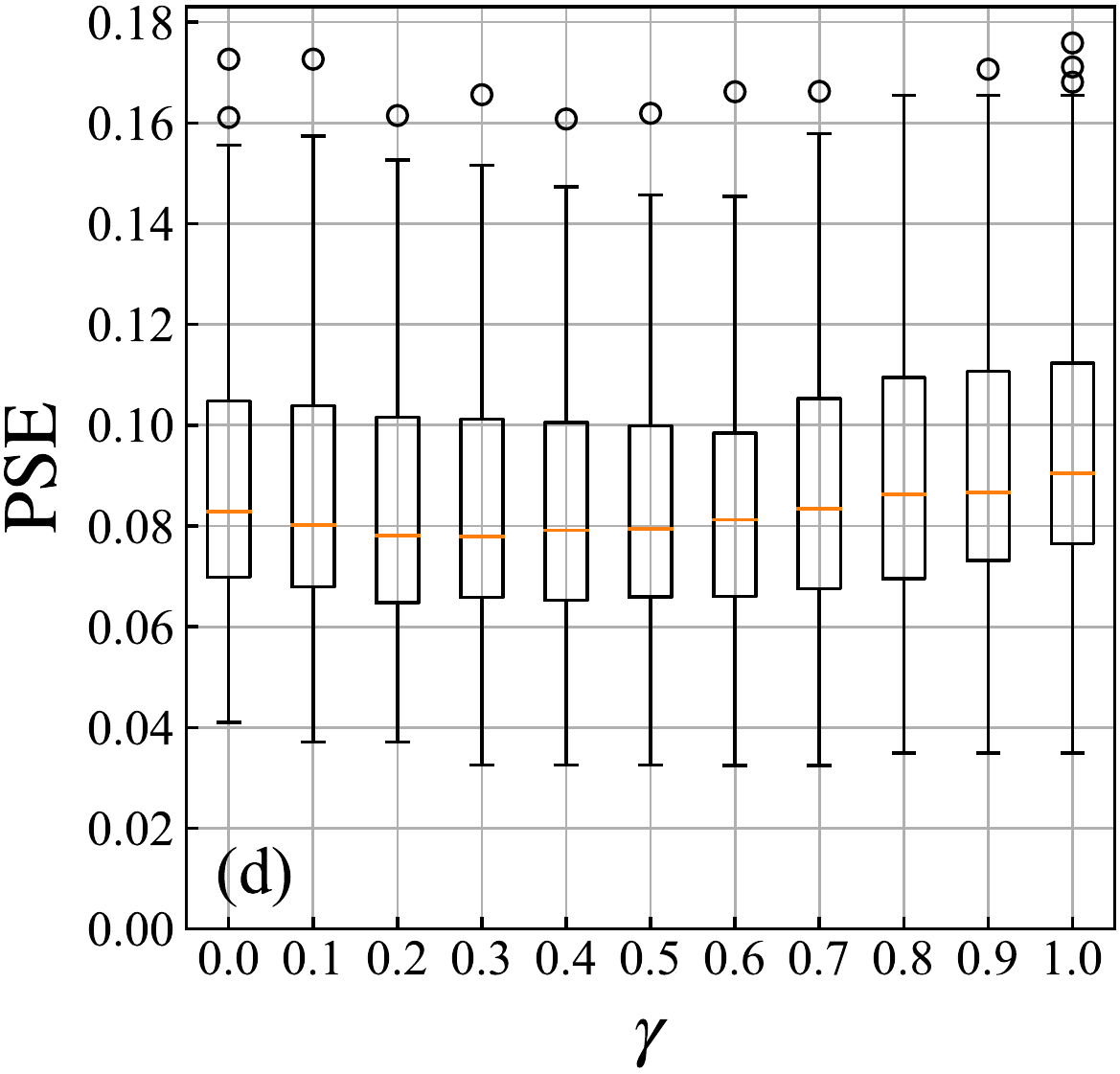}
    \end{center}
    \end{minipage}
  \end{tabular}
  \caption{\label{box_BUMPSN100std03} 
  The box plots of $RVs$, MSE, PSE, and scale parameter $b$ selected by $\mathrm{EPIC}_\gamma$ 
  with $Bias_{\mathrm{true}}$ and ${}_P\mathrm{C}_{df=\mathrm{Tr}\bm{H}}$. 
  These are obtained through 100 Monte Carlo trials for regression to the BUMPS data with $(N=100,\sigma=0.3)$. 
   (a) The box plot of scale parameter $b$. (b) That of $RVs$. 
   (c) That of $\mathrm{MSE}$. (d) That of $\mathrm{PSE}$.
   }
\end{figure}
\begin{figure}[H]
  \begin{tabular}{cc}
    \hspace{-15pt}
    \begin{minipage}[t]{0.5\hsize}
      \begin{center}
      \includegraphics[clip,scale=0.55]{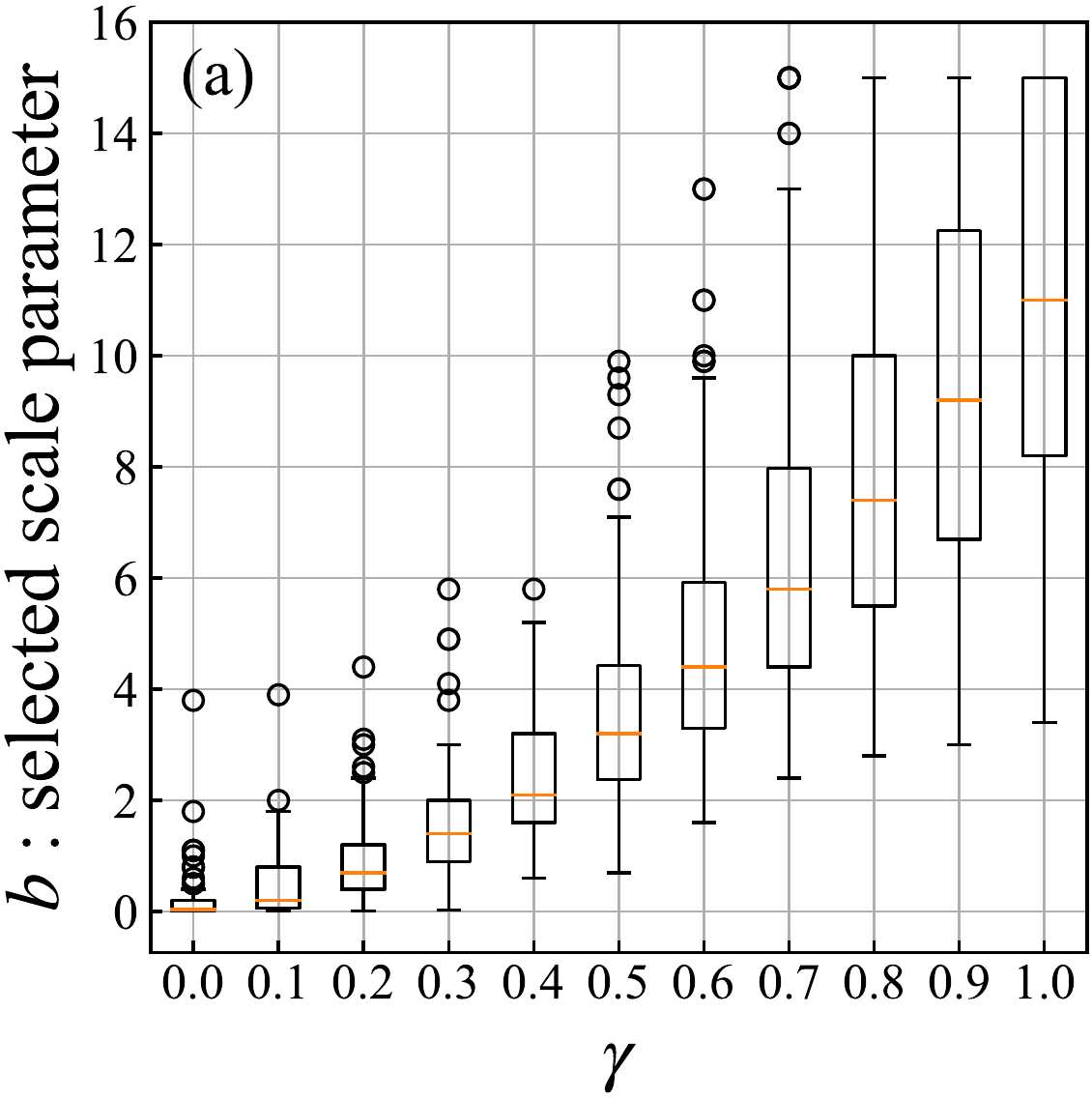}
    \end{center}
    \end{minipage} &
    \hspace{-10pt}
    \begin{minipage}[t]{0.5\hsize}
      \begin{center}
      \includegraphics[clip,scale=0.55]{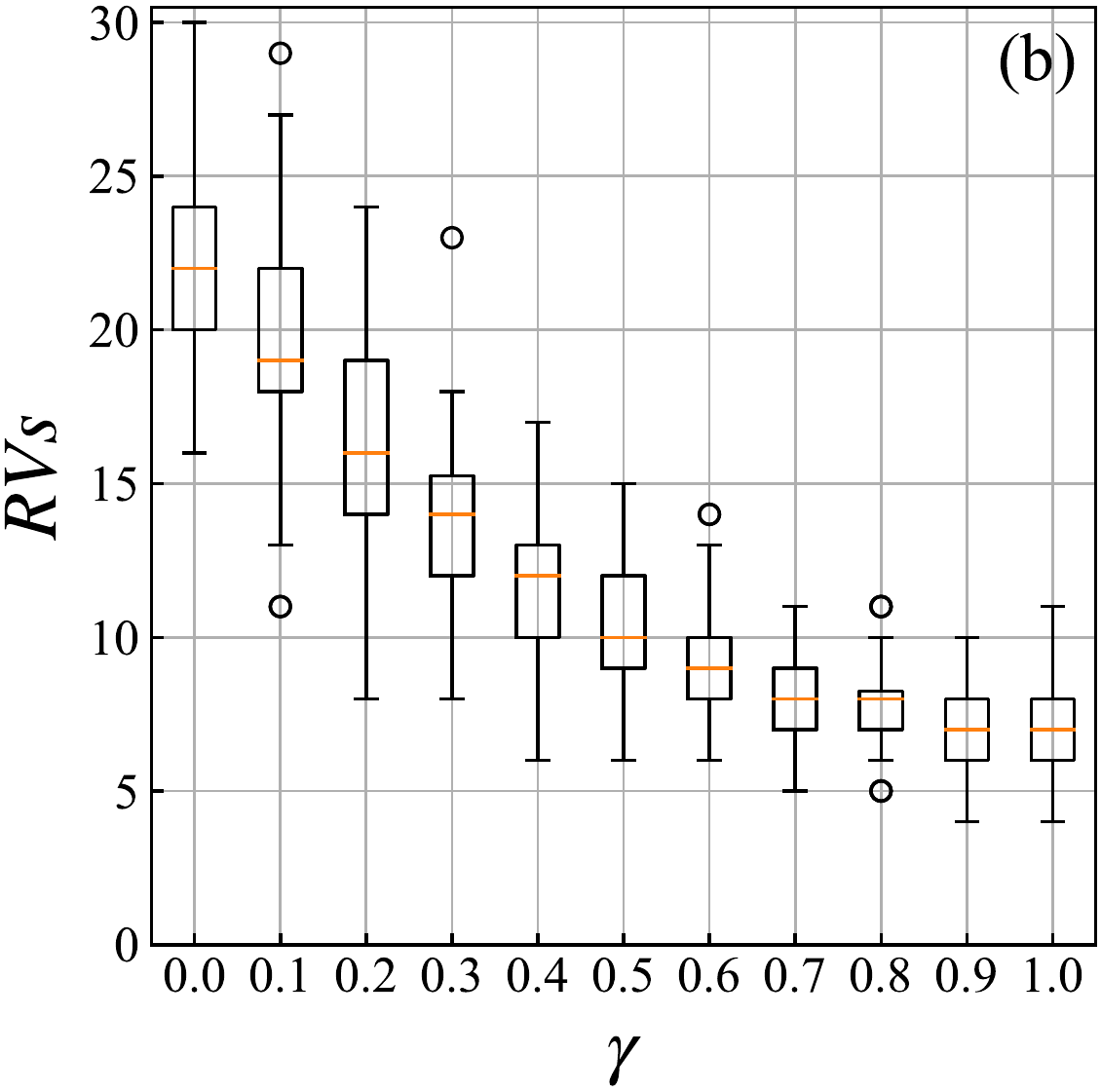}
    \end{center}
    \end{minipage}\\
    \hspace{-15pt}
    \begin{minipage}[t]{0.5\hsize}
      \begin{center}
      \includegraphics[clip,scale=0.55]{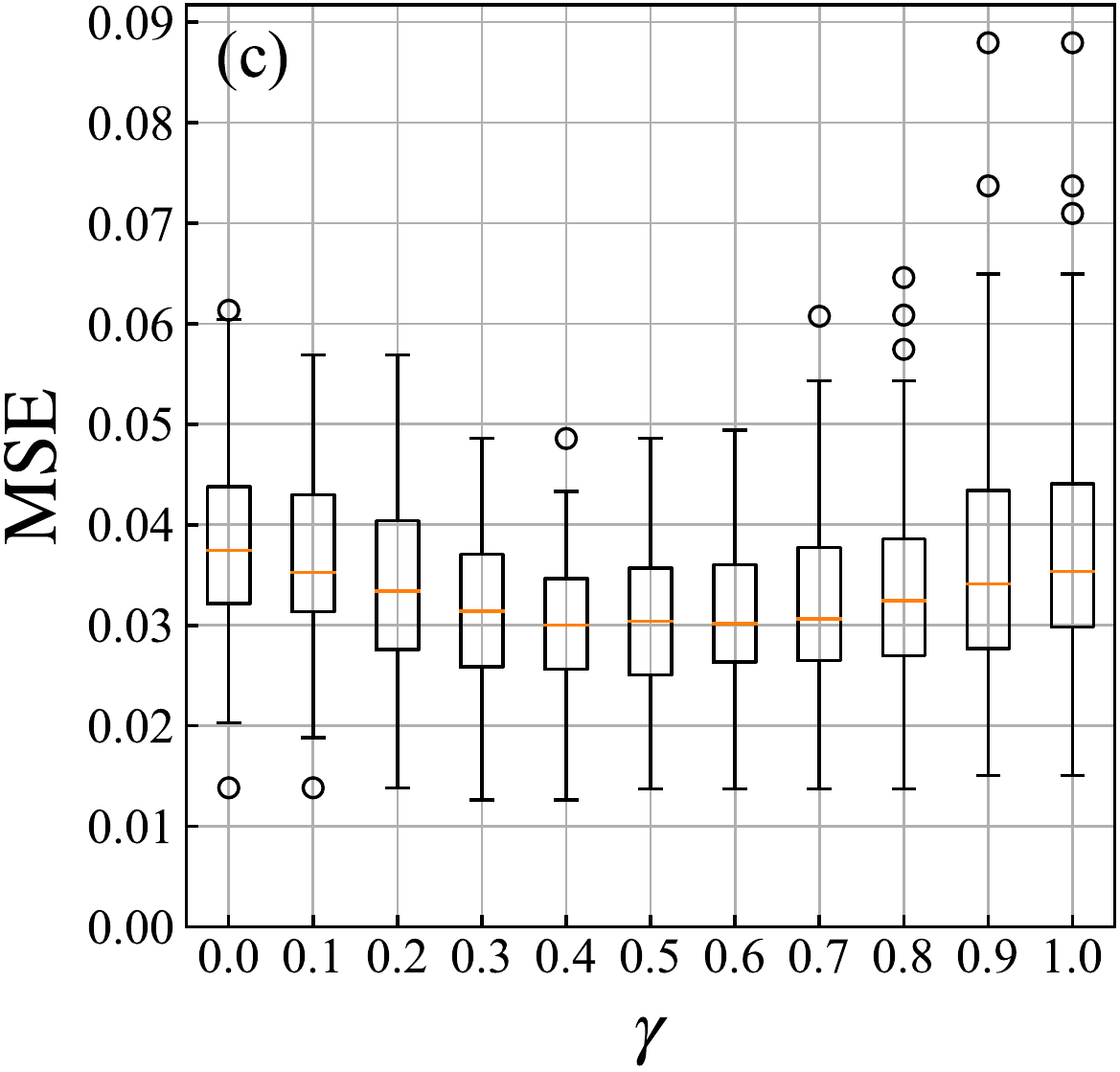}
    \end{center}
    \end{minipage} &
    \hspace{-10pt}
    \begin{minipage}[t]{0.5\hsize}
      \begin{center}
      \includegraphics[clip,scale=0.55]{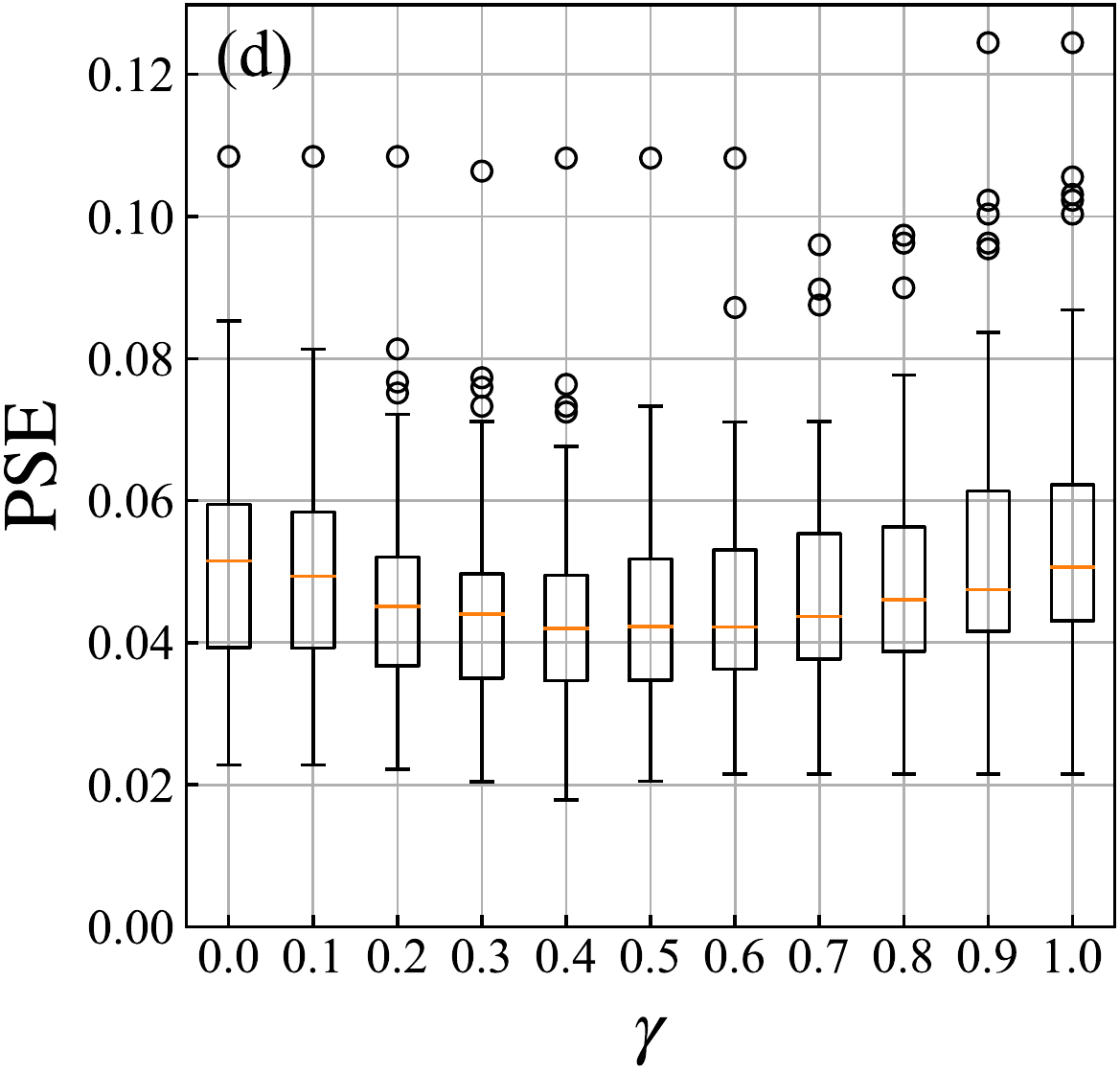}
    \end{center}
    \end{minipage}
  \end{tabular}
  \caption{\label{box_DOPPLERN100std03} 
  The box plots of $RVs$, MSE, PSE, and scale parameter $b$ selected by $\mathrm{EPIC}_\gamma$ 
  with $Bias_{\mathrm{true}}$ and ${}_P\mathrm{C}_{df=\mathrm{Tr}\bm{H}}$. 
  These are obtained through 100 Monte Carlo trials for regression to the DOPPLER data with $(N=100,\sigma=0.3)$. 
   (a) The box plot of scale parameter $b$. (b) That of $RVs$. 
   (c) That of $\mathrm{MSE}$. (d) That of $\mathrm{PSE}$. 
  }
\end{figure}
The characteristic property of $\mathrm{EBIC}_\gamma$ is trade-off relationship 
between the number of covariates included in estimated model and the values of $\gamma$ \cite{chen2008extended}. 
In other words, $\mathrm{EBIC}_\gamma$ with $\gamma$ close to zero is less restrictive on the number of covariates and 
it with $\gamma$ close to one is too restrictive on them. 
This property is confirmed in our results. 
As shown in Figs. \ref{box_BUMPSN100std03}(a) and \ref{box_DOPPLERN100std03}(a), 
the small values of $b$ are selected by $\mathrm{EPIC}_\gamma$ with $\gamma$ close to zero. 
On the other hand, the large values of $b$ are chosen when $\gamma$ increases towards one. 
We say again following; the model becomes sparse as $b$ increases (see Figs. \ref{df_b_BUMPS}-\ref{regwalpha_b_DOPPLER}).
This matter is clearly shown in Figs. \ref{box_BUMPSN100std03}(b) and \ref{box_DOPPLERN100std03}(b). 
In consequence, the trade-off relationship of $\mathrm{EPIC}_\gamma$ may be 
consistent with that of $\mathrm{EBIC}_\gamma$. 

Figs. \ref{box_BUMPSN100std03}(c)(d) and \ref{box_DOPPLERN100std03}(c)(d) tell us that 
$\mathrm{EBIC}_\gamma$ with $\gamma$ close to zero or one does not select a model with good prediction accuracy. 
The former seems to be due to include the relevance vectors which are unnecessary for following the true function, i.e., over-fitting. 
The reason for the latter seems to be restricts even the relevance vectors contributing to the regression for a true function, i.e., under-fitting. 
It is considered that the optimal value of $\gamma$ seems to be between zero and one. 
\section{Concluding Remarks\label{sec:6}}
In this study, the inverse gamma hyperprior with the shape parameter close to zero and the scale parameter not necessary to zero 
has been introduced as hyperprior over noise precision of ARD prior in the relevance vector machine. 
This hyperprior has been non-informative when the scale parameter $b$ approaches zero, while 
weakly informative hyperprior in terms of enhancing sparsity when $b$ increases from zero (see Fig. \ref{comp_priors}). 
The effect of this hyperprior has been confirmed through the regression to non-homogeneous data, applying the multiple kernel method to the variational relevance machine (MK-VRVM). 
The degrees of freedom have decreased with increasing $b$ from zero (see Figs. \ref{df_b_BUMPS}-\ref{regwalpha_b_DOPPLER}), 
which are consistent with Fig. \ref{comp_priors}. 
We have shown that the traditional predictive information criterion (PIC) 
is also obtained from maximizing the posterior probability of the model similar to derivation of BIC. 
In this process, extended PIC (EPIC) is proposed 
by assuming the prior probability of the model not to be uniform. 
The MK-VRVM with inverse gamma hyperprior has performed well in terms of predictive accuracy by 
selecting scale parameter with $\mathrm{EPIC}$ (see Tables \ref{tab_BUMPSN100std03} and \ref{tab_DOPPLERN100std03}). 
The nature of trade-off in $\mathrm{EPIC}$ may have been consistent with that in $\mathrm{EBIC}$. 
\section*{Acknowledgments}
This research was supported by JSPS KAKENHI Grant Number JP19K11854. 
\appendix 
\section{Sequential Algorithm for VRVM with Inverse Gamma Hyperprior\label{sequentalalgo}}
We discuss an application of fast sequential algorithm to VRVM when inverse gamma hyperprior is used. 
The FMLM (Fast Marginal Likelihood Maximization) was developed as such a algorithm in RVM with the second type maximum likelihood. 
In VRVM using variational Bayes, a counterpart of the FMLM was also reported in the case of gamma hyperprior over $\bm{\alpha}$ \cite{shutin2011fast1,shutin2011fast2}. 
This was called FV-SBL (Fast Variational Sparse Bayes Learning). 

We first explain the FV-SBL for VRVM with gamma hyperprior \cite{shutin2011fast1,shutin2011fast2}. 
The stationary points of $\tilde{b}_m$ or $\mathbb{E}{[\alpha_m]}$ and their divergence conditions 
are necessary. 
These works are achieved by solving the equations derived from Eqs.\ \eqref{varw} and \eqref{bmtil}. 
The equation for $m$-th component is given as 
\begin{equation}
  2\left(\tilde{b}_m-b\right)=(\omega_m^2+\varsigma_m)-\frac{\varsigma_m^2+2\varsigma_m\omega_m^2}{\frac{\tilde{b}_m}{a+1/2}+\varsigma_m}+\frac{\varsigma_m^2\omega_m^2}{\left(\frac{\tilde{b}_m}{a+1/2}+\varsigma_m\right)^2}, \label{stpointGam}
\end{equation}
where $\varsigma_m$ and $\omega_m^2$ are given by
\begin{eqnarray}
  \varsigma_m&=&\bm{e}_m^{\mathrm{T}}\tilde{\bm{\Sigma}}_{\overline{m}}\bm{e}_m, \\
  \omega_m^2&=&\mathbb{E}{[\beta]}^2\bm{e}_m^{\mathrm{T}}\tilde{\bm{\Sigma}}_{\overline{m}}\bm{\Phi}^{\mathrm{T}}\bm{y}\bm{y}^{\mathrm{T}}\bm{\Phi}\tilde{\bm{\Sigma}}_{\overline{m}}\bm{e}_m.
\end{eqnarray}
Here 
\begin{equation}
\tilde{\bm{\Sigma}}_{\overline{m}}=\left\{\sum_{k\neq m}\mathbb{E}{[\alpha_k]} \bm{e}_k\bm{e}_k^{\mathrm{T}}+\mathbb{E}{[\beta]}\bm{\Phi}^{\mathrm{T}}\bm{\Phi}\right\}^{-1},
\end{equation}
where $\bm{e}_m=$ is a natural basis for $m$-th coordinate. 
Although Refs. \cite{shutin2011fast1,shutin2011fast2} considered the stationary point of $\mathbb{E}{[\alpha_m]}$, 
it is equivalent to considering that of $\tilde{b}_m$ shown as Eq.\ \eqref{stpointGam}. 
Eq.\ \eqref{stpointGam} could be solved analytically. 
Therefore we could obtain the stationary point of $\tilde{b}_m$ and its divergence condition. 
In consequence, the FV-SBL could be applied when the gamma hyperprior is used. 

Next, we try to apply the FV-SBL to VRVM with inverse gamma hyperprior. 
Unfortunately, this turn out to be difficult. 
The reason is that the stationary points of $\tilde{a}_m$ or $\mathbb{E}{[\alpha_m]}$ could not be solved analytically. 
Here the equation of $\tilde{a}_m$ which we have to solve is 
\begin{equation}
  \tilde{a}_m=(\omega_m^2+\varsigma_m)-\frac{\varsigma_m^2+2\varsigma_m\omega_m^2}{\sqrt{\frac{\tilde{a}_m}{2b}}\frac{K_{\tilde{p}_m}\left(\sqrt{2b\tilde{a}_m}\right)}{K_{\tilde{p}_m+1}\left(\sqrt{2b\tilde{a}_m}\right)}+\varsigma_m}+\frac{\varsigma_m^2\omega_m^2}{\left\{\sqrt{\frac{\tilde{a}_m}{2b}}\frac{K_{\tilde{p}_m}\left(\sqrt{2b\tilde{a}_m}\right)}{K_{\tilde{p}_m+1}\left(\sqrt{2b\tilde{a}_m}\right)}+\varsigma_m\right\}^2}, \label{stpointInGam}
\end{equation}
where this equation is derived from Eqs.\ \eqref{varw} and \eqref{InGamamtil}. 
Solving Eq.\ \eqref{stpointInGam} with respect to $\tilde{a}_m$ is not easy, 
because $\tilde{a}_m$ is included in the argument of $K_p(\cdot)$, which is the modified Bessel function of the second kind. 
Even if the analytical solution of $\tilde{a}_m$ cannot be obtained, 
the divergence condition of it may be found. 
That may allow us to perform faster calculation. 
This is because the divergence condition is used 
to prune the kernel functions in the model, i.e., the size of design matrix can be reduced. 
\section{Derivation of Extended Predictive Information Criterion\label{derivationEPIC}}
The traditional PIC was originally obtained by minimizing the KL distance between 
true distribution and predictive distribution \cite{genshiro1997information}. 
In this appendix, it is derived based on maximization of the posterior probability over the models, 
which is the similar derivation as BIC and $\mathrm{EBIC}_\gamma$ \cite{schwarz1978estimating}. 
In the process of this derivation, the extended predictive information criterion introduced in Sec. \ref{sec:4-1} is obtained. 

Let a model space $\mathcal{S}$ be power set of $\left\{1,2,3,\cdots,P\right\}$. We assume that $\mathcal{S}$ is divided as $\mathcal{S}=\bigcup_{j=1}^{P}\mathcal{S}_j$, 
where $\mathcal{S}_j$ is model space consisting of the models with $j$ covariates, i.e., $\forall s \in \mathcal{S}_j$ $|s|=j$. 
The number of elements in a set is denoted as $|\cdot|$.
Note that $\forall s \in \mathcal{S}$ and $\exists j \in\left\{1,2\cdots,P\right\}$ such that $s \in \mathcal{S}_j$. 
Here we make assumptions in terms of distributions of models as following; 
$p(s|\mathcal{S}_j)=1/|\mathcal{S}_j|$ and $p(\mathcal{S}_j)\propto |\mathcal{S}_j|^{1-\gamma}$, 
where $|\mathcal{S}_j|={}_{P}\mathrm{C}_{j}$ and $\gamma \in [0,1]$. 
In these assumptions, the prior over model $s$ is $p(s)\propto p(s|\mathcal{S}_j)p(\mathcal{S}_j)\propto |\mathcal{S}_j|^{-\gamma}$. 
Let $\forall s \in \mathcal{S}$ specify a parametric distribution $p(\bm{x}|\bm{\theta}_s)$, 
where $\bm{\theta}_s \in \Theta_s \subset \mathbb{R}^{P}$ and $\bm{\theta}_s$ is a $P$-dimensional parameter vector with those components outside $s$ being set to zero. 
Observations $\bm{x} = \left(x_1,\cdots,x_N \right)$ are generated from an unknown probability distribution $G(\bm{x})$ whose
density function is $g(\bm{x})$. 
We denote a predictive distribution given a model $s$ and future data $\bm{z}$ as $h(\bm{z}|\bm{x},s)$. 
\begin{theorem}
  If $\gamma=0$, 
  maximizing $\mathbb{E}_{G(\bm{z})}{[\ln h(\bm{z}|\bm{x},s)]}$ with respect to model $s$ 
    is equivalent to that of $\mathbb{E}_{G(\bm{z})}{[\ln p(s|\bm{Z})]}$. 
  \end{theorem}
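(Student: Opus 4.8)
\section*{Proof proposal}

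The plan is to obtain the claimed equivalence directly from Bayes' rule applied at the level of models, using the predictive distribution $h(\bm{z}|\bm{x},s)$ as the likelihood of the future data under model $s$. First I would write the posterior probability of a model $s$ given future data $\bm{Z}$ as
\begin{equation}
  p(s|\bm{Z}) = \frac{h(\bm{Z}|\bm{x},s)\,p(s)}{\sum_{s'\in\mathcal{S}} h(\bm{Z}|\bm{x},s')\,p(s')} .
\end{equation}
This is legitimate because each $h(\cdot|\bm{x},s)$ integrates to one in its first argument, so that together with the model prior $\{p(s)\}_{s\in\mathcal{S}}$ introduced above the denominator is the marginal density of $\bm{Z}$; throughout, all quantities are implicitly conditioned on the observed $\bm{x}$, which is held fixed.

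Next I would take logarithms and then the expectation with respect to the true distribution $G(\bm{z})$, obtaining
\begin{equation}
\begin{split}
  \mathbb{E}_{G(\bm{z})}[\ln p(s|\bm{Z})] ={}& \mathbb{E}_{G(\bm{z})}[\ln h(\bm{Z}|\bm{x},s)] + \ln p(s) \\
   & - \mathbb{E}_{G(\bm{z})}\left[\ln \sum_{s'\in\mathcal{S}} h(\bm{Z}|\bm{x},s')\,p(s')\right] .
\end{split}
\end{equation}
The third term on the right-hand side is independent of $s$. For the second term, the prior assumption $p(s)\propto|\mathcal{S}_j|^{-\gamma}$ for $s\in\mathcal{S}_j$ reduces, when $\gamma=0$, to $p(s)\propto 1$, i.e.\ a constant that does not depend on $s$. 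Hence $\mathbb{E}_{G(\bm{z})}[\ln p(s|\bm{Z})]$ and $\mathbb{E}_{G(\bm{z})}[\ln h(\bm{Z}|\bm{x},s)]$ differ only by a quantity that is constant over the model space $\mathcal{S}$, so the two functions attain their maxima over $\mathcal{S}$ at exactly the same models, which is the assertion.

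The algebra is routine; the one point that needs care is the setup, namely justifying that the predictive density $h(\bm{z}|\bm{x},s)$ may legitimately play the role of the likelihood of the future data in the model-level Bayes rule (as the posterior predictive it is the conditional density of $\bm{Z}$ given $(\bm{x},s)$), and that the normalizing sum is finite, so that $p(s|\bm{Z})$ is well defined for $G$-almost every $\bm{z}$. I would also keep $\gamma$ general in the second display even though the theorem only needs $\gamma=0$: the term $\ln p(s) = -\gamma\ln|\mathcal{S}_j| + \mathrm{const} = -\gamma\ln\!\left({}_P\mathrm{C}_{|s|}\right) + \mathrm{const}$ is precisely the source of the penalty $2\gamma\ln\!\left({}_P\mathrm{C}_{df}\right)$ in $\mathrm{EPIC}_\gamma$, once ``maximize $\mathbb{E}_{G(\bm{z})}[\ln h(\bm{z}|\bm{x},s)]$'' is converted into ``minimize $-2\ln h(\bm{y}|\bm{y}) + 2\,Bias$'' through the bias-correction argument of Sec.~\ref{sec:4-1}. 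That conversion is carried out separately in the remainder of this appendix and is not needed for the present statement.
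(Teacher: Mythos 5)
Your proposal is correct and follows essentially the same route as the paper's proof: both establish $p(s|\bm{z})\propto h(\bm{z}|\bm{x},s)\,p(s)$ via Bayes' rule, take logarithms and the expectation over $G(\bm{z})$, and observe that $\ln p(s)$ is constant in $s$ when $\gamma=0$. The only cosmetic difference is that you condition on $\bm{x}$ throughout and display the $s$-independent normalizer explicitly, whereas the paper reaches the same proportionality by decomposing $p(\bm{x},s|\bm{z})$ and cancelling $p(\bm{x}|s)$.
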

\begin{proof} 
A conditional probability of $(\bm{x},s)$ given future data $\bm{z}$ generated independently on observed $\bm{x}$ is 
\begin{equation}
  p(\bm{x},s|\bm{z})\propto h(\bm{z}|\bm{x},s)p(\bm{x},s). \label{xsgivenz} 
\end{equation}
The probability $p(\bm{x},s|\bm{z})$ can be decomposed as $p(\bm{x}|s,\bm{z})p(s|\bm{z})=p(\bm{x}|s)p(s|\bm{z})$ because 
$\bm{x}$ does not depend on future data $\bm{z}$. 
Also decomposing $p(\bm{x},s)$ as $p(\bm{x}|s)p(s)$, Eq.\ \eqref{xsgivenz} reduces to 
\begin{equation}
  p(s|\bm{z})\propto h(\bm{z}|\bm{x},s)p(s). \label{sgivenz}
\end{equation}
The logarithm of Eq.\ \eqref{sgivenz} is 
\begin{equation}
  \ln p(s|\bm{z})=\ln h(\bm{z}|\bm{x},s)+\ln p(s) +\mathrm{const}. \label{logsgivenz}
\end{equation}
In derivation of BIC or EBIC, a counterpart of Eq.\ \eqref{logsgivenz} is $\ln p(s|\bm{x})=\ln p(\bm{x}|s)+\ln p(s) +\mathrm{const}$, 
where $\ln p(\bm{x}|s)$ is marginal likelihood. This could be evaluated directly with Laplace approximation \cite{tierney1986accurate}. 
Conversely, Eq.\ \eqref{logsgivenz} depending on future data $\bm{z}$ could not be calculated directly. 
It is reasonable to evaluate expectation $\mathbb{E}_{G(\bm{z})}{[\cdot]}$, which is given as 
 \begin{equation}
  \mathbb{E}_{G(\bm{z})}{[\ln p(s|\bm{z})]}=\mathbb{E}_{G(\bm{z})}{[\ln h(\bm{z}|\bm{x},s)]}+\ln p(s) +\mathrm{const}. \label{Elogsgivenz}
\end{equation}
In the case of $\gamma=0$, $\ln p(s)=-\gamma \ln |\mathcal{S}_j|+\mathrm{const}=\mathrm{const}$. 
Here we observe that the theorem holds in the case of $\gamma=0$. 
\end{proof}
The maximizing $\mathbb{E}_{G(\bm{z})}{[\ln p(s|\bm{Z})]}$ with respect to model $s$ 
is equivalent to minimizing the $\mathrm{KL}$ distance between $g(\bm{z})$ and $h(\bm{z}|\bm{x},s)$ 
when above theorem holds. 
This is because $\mathrm{KL}\left(g(\bm{z}),h(\bm{z}|\bm{x},s)\right)=\mathbb{E}_{G(\bm{z})}{[\ln g(\bm{z})]}-\mathbb{E}_{G(\bm{z})}{[\ln h(\bm{z}|\bm{x},s)]}$. 
The term $\mathbb{E}_{G(\bm{z})}{[\ln g(\bm{z})]}$ is a constant, depending solely on the $G(\bm{z})$. 

The candidate for an estimator of $\mathbb{E}_{G(\bm{Z})}{[\ln h(\bm{z}|\bm{x},s)]}$ is $\ln h(\bm{x}|\bm{x},s)$. 
This estimator is not unbiased and bias correction of it is necessary. 
In consequence, 
an information criterion, which is maximizing left-hand side of Eq.\ \eqref{Elogsgivenz}, is obtained as 
\begin{equation}
  \mathrm{EPIC}_\gamma = -2\ln h(\bm{x}|\bm{x},s)+2Bias+2\gamma\ln{}_{P}\mathrm{C}_{j}. \label{EPICderivation}
\end{equation}
In the case of $\gamma=0$, Eq.\ \eqref{EPICderivation} is identical to traditional PIC \cite{genshiro1997information}. 
\section{Function of Artificial Data\label{gx}}
The functions used for the numerical evaluation are described. 
These functions are based on Ref. \cite{donoho1995adapting}, while we added slight modifications. 
The function $g(x)$ of BUMPS is 
\begin{equation}
  g(x)=\sum_{i=1}^{7}t_i\left(1+|(x-x_i)/s_i|\right)^{-4},
\end{equation}
where 
\begin{eqnarray*}
  \left\{t_i;i=1,\cdots,7\right\}&=&\left\{1,1.7,1,1.3,1.7,1.4,0.8 \right\}, \\
  \left\{x_i;i=1,\cdots,7\right\}&=&\left\{0.15,0.2,0.3,0.4,0.6,0.75,0.85\right\}, \mathrm{and} \\
  \left\{s_i;i=1,\cdots,7\right\}&=&\left\{0.015,0.015,0.018,0.03,0.03,0.09,0.03\right\}.
\end{eqnarray*}
  That of DOPPLER is 
  \begin{equation}
    g(x)=\sqrt{x(1-x)}\sin\left(2\pi 1.05/(x+0.15) \right).
  \end{equation}
  That of BLOCKS is 
  \begin{equation}
    g(x)=\sum_{i=1}^{7}t_i(1+\mathrm{sgn}(x-x_i))/2,\label{gxBLOCKS}
    \end{equation}
    where 
    \begin{eqnarray*}
    \left\{t_i;i=1,\cdots,7\right\}&=&\left\{1,-1.7,1,-1.3,1.7,-1.4,0.8 \right\} \mathrm{and} \\
    \left\{x_i;i=1,\cdots,7\right\}&=&\left\{0.15,0.2,0.3,0.4,0.6,0.75,0.85\right\}. 
  \end{eqnarray*}
    That of HEAVISINE is 
  \begin{eqnarray}
    g(x)&&=5\sin{\left(4\pi x\right)}+\mathrm{sgn}\left(x-0.1\right)-2\mathrm{sgn}\left(x-0.25\right)\nonumber\\
    &&\qquad -3\mathrm{sgn}\left(x-0.5\right)+4\mathrm{sgn}\left(x-0.75\right)+\mathrm{sgn}\left(x-0.9\right).
  \end{eqnarray}
    Using $g_{\mathrm{min}} = \min \left\{g(x);x\in[0,1]\right\}$ and $g_{\mathrm{max}} = \max \left\{g(x);x\in[0,1]\right\}$, 
    let us defined $C_1=(g_{\mathrm{max}}-g_{\mathrm{min}})/2$ and $C_2=(g_{\mathrm{max}}+g_{\mathrm{min}})/2$. 
    The BUMPS is normalized as follows; $3g(x)/g_{\mathrm{max}}$. 
    The DOPPLER and HEAVISINE are normalized as follows; $(g(x)-C_1)/C_2$. 
    The BLOCKS is not normalized and is used as it is in Eq.\ \eqref{gxBLOCKS}.
\bibliography{RVM} 
\bibliographystyle{unsrt} 
\end{document}